\def\E{\mathbb{E}}
\def\P{\mathbb{P}}
\def\N{\mathbb{N}}
\def\S{\mathbb{S}}
\def\I{\mathcal{I}}
\def\R{\mathcal{R}}
\def\T{\mathcal{T}}
\def\D{\mathcal{D}}
\def\A{\mathcal{A}}
\def\K{\mathcal{K}}
\definecolor{dark_purple}{rgb}{0.4, 0.0, 0.4}
\newtheorem{theorem}{Theorem}
\newtheorem{lemma}{Lemma}
\newtheorem*{theorem*}{Theorem}
\theoremstyle{definition}
\definecolor{PennBlue}{RGB}{001,031,091}
\definecolor{PennRed}{RGB}{153,0,0}
\newcommand{\real}{\mathbb{R}}
\newcommand{\II}[1]{\mathbb{I}_{\left\{#1\right\}}}
\newcommand{\PP}[1]{\mathbb{P}\left[#1\right]}
\newcommand{\EE}[1]{\mathbb{E}\left[#1\right]}
\newcommand{\ev}[1]{\left\{#1\right\}}
\newcommand{\pa}[1]{\left(#1\right)}
\newcommand{\wh}{\widehat}
\newcommand{\wt}{\widetilde}
\definecolor{PalePurp}{rgb}{0.66,0.57,0.66}
\newcommand{\tc}{\tilde{c}}
\newcommand{\tI}{\tilde{I}}
\newcommand{\EEt}[1]{\mathbb{E}_t\left[#1\right]}
\begin{document}

\title{On the Hardness of Inventory Management with Censored Demand Data}
\author{\fontsize{13}{13}\selectfont {G\'{a}bor Lugosi}\thanks{\fontsize{9}{9}\selectfont ICREA \& Department of Economics and Business, 
Universitat Pompeu Fabra \& Barcelona Graduate School of Economics (gabor.lugosi@gmail.com).}
\hspace{0.7in} {Mihalis G. Markakis}\thanks{\fontsize{9}{9}\selectfont Department of Economics and Business, Universitat Pompeu Fabra \& Barcelona School of Management (mihalis.markakis@upf.edu).}
\hspace{0.7in}{Gergely Neu}\thanks{\fontsize{9}{9}\selectfont Department of Information and Communication Technologies, Universitat Pompeu Fabra (gergely.neu@gmail.com).}}

\date{}
\maketitle

\begin{abstract}

We consider a repeated newsvendor problem where the inventory manager has no prior information about the demand, and can access only 
censored/sales data. In analogy to multi-armed bandit problems, the manager needs to simultaneously ``explore'' and ``exploit'' with her 
inventory decisions, in order to minimize the cumulative cost. We make no probabilistic assumptions---importantly, independence or time 
stationarity---regarding the mechanism that creates the demand sequence. Our goal is to shed light on the hardness of the problem, and to 
develop policies that perform well with respect to the regret criterion, that is, the difference between the cumulative cost of a policy and 
that of the best fixed action/static inventory decision in hindsight, uniformly over all feasible demand sequences. We show that a simple 
randomized policy, termed the Exponentially Weighted Forecaster, combined with a carefully designed cost estimator, achieves optimal scaling 
of the expected regret (up to logarithmic factors) with respect to all three key primitives: the number of time periods, the number of 
inventory decisions available, and the demand support. Through this result, we derive an important insight: the benefit from ``information 
stalking'' as well as the cost of censoring are both negligible in this dynamic learning problem, at least with respect to the regret 
criterion. Furthermore, we modify the proposed policy in order to perform well in terms of the tracking regret, that is, using as benchmark 
the best sequence of inventory decisions that switches a limited number of times. Numerical experiments suggest that the proposed approach 
outperforms existing ones (that are tailored to, or facilitated by, time stationarity) on nonstationary demand models. Finally, we consider 
the ``combinatorial'' version of the repeated newsvendor problem, that is, single-warehouse multi-retailer inventory management of a 
perishable product. We extend the proposed approach so that, again, it achieves near-optimal performance in terms of the regret.

\vspace{0.2in}

\noindent\textbf{Keywords:} repeated newsvendor problem, demand learning, censored observations, regret minimization, Exponentially Weighted Forecaster.
\newline

\end{abstract}

\baselineskip0.6cm

\setcounter{page} {1}

%==========================================================================================================================================
%==========================================================================================================================================

\section{Introduction} \label{Intr}

We consider the multi-period inventory management problem of a perishable product, like newspapers, fresh food, or certain pharmaceutical 
products, where no prior information is available about the demand for the product over the different periods. This may be the case when a 
new product is introduced to the market, or when the market conditions for an existing product change drastically, e.g., due to a major competitor 
entering/exiting the market, due to a major economic downturn etc. An additional complication comes from the fact that, often times, the 
inventory manager cannot observe the actual demand for the product due to \textit{censoring}: the firm is unable to measure or estimate 
accurately lost sales, so the manager has access only to sales data. However, the sales depend on the manager's prior inventory decisions, 
making inferences about the underlying demand much harder. In such scenarios, the inventory manager is faced with a dynamic learning 
problem, having to simultaneously ``explore'' with her inventory decisions in order to learn the underlying demand, as well as ``exploit,'' that is, focus mostly on decisions that are likely 
to incur low cost. Due to its practical importance and intellectual challenge, the problem of inventory management with demand learning through censored data has attracted significant attention from the academic community, leading to valuable insights as we detail below.

Our main motivation, and point of departure from the existing literature, stems from the fact that the demand for a product may very well be nonstationary: 
trends and seasonalities are very common in a demand time series; competition in the market that a firm operates may change over time, in terms of both 
the assortments and the prices offered; consumers may time their decisions strategically. Our goal is to develop a framework that incorporates, in a tractable 
way, the potentially nonstationary nature of the demand, to explore the fundamental limits of performance in this setting and propose suitable inventory 
management policies, and to shed light on the performance loss compared to the case where the demand is time-stationary. Accordingly, we adopt a ``nonstochastic'' 
viewpoint: we formulate the problem of inventory management under censored demand as a repeated game between the inventory manager and the 
market, without making any probabilistic assumptions on the mechanism via which the market generates the demand. We evaluate the performance 
of different policies with respect to the regret criterion, that is, the difference between the cumulative cost of a policy and the cumulative 
cost of the best fixed action/inventory decision in hindsight, for a given demand sequence, and provide performance guarantees that hold 
uniformly over all demand sequences.

The above viewpoint is also referred to in the literature as the ``adversarial'' approach, although this term is somewhat 
unfortunate: in our setting, the market chooses a sequence of demands for the different time periods arbitrarily, as far as the 
inventory manager is concerned. Importantly though, the market does not adapt its strategy according to the actions of the inventory manager. 
Indeed, it seems far-fetched to assume that an entire market adapts, and acts adversarially, to the inventory decisions of a firm. 
So, while our modeling framework can capture demand correlations and nonstationarities to a significant extent, it is certainly not a 
game-theoretic model.

In the remainder of the introduction we provide a detailed account of the existing literature on the repeated newsvendor problem with 
demand learning, we give some background on the nonstochastic approach to dynamic learning problems, and we highlight the main contributions of 
the present paper.

%==========================================================================================================================================
%==========================================================================================================================================

\bigskip\subsection{Related Literature} \label{LitRev}

Stochastic inventory theory is a field with a long history and rich literature. A detailed account of this literature is beyond the scope 
of our paper, so we refer the interested reader to \cite{P02}. It could be argued that one of the most influential contributions in this 
field is the development and analysis of the newsvendor model, where a manager has to make an inventory decision in anticipation of 
uncertain demand over a single selling period, aiming at minimizing the total expected overage and underage cost. Elementary arguments can 
show that the optimal inventory decision is a \textit{critical quantile} of the demand distribution. Equally important is the mathematical 
model of multi-period inventory management of a nonperishable product with uncertain demand, whose dynamic programming formulation gives 
rise to the optimality of $(s,S)$ inventory replenishment policies. To some extent both models are related to our work, as we consider a 
multi-period inventory management problem of a perishable product, that is, a repeated newsvendor problem. However, a common (and critical) 
assumption in both models, and in ``classical'' stochastic inventory theory overall, is that even though the inventory manager does not know 
the realization of future demand, she does have access to an accurate probabilistic description of it, for example, via historical data. We do not make this assumption in our work.

Throughout the years, there have been several attempts to relax the assumption that the correct distribution of future demand is available. 
The most followed approach is to assume that the demand belongs to a particular family of probability distributions, but one or more 
parameters are unknown. This \textit{parametric approach} is usually cast in a Bayesian learning framework: a prior on the parameters is 
also assumed, and the belief about the true parameter values is updated with observed demand samples through Bayes rule. Early works in that 
direction include \cite{S59}, \cite{K60}, and \cite{I64}, which focus on exponential families of demand distributions. \cite{MS66} and 
\cite{A85} consider variants of the problem and incorporate Bayesian learning into a dynamic programming framework, while \cite{CF71} uses 
the Kalman filtering approach to achieve efficient learning/forecasting. Finally, \cite{L90} shows the near-optimality of simple myopic 
inventory management policies, when combined with adaptive tuning of the parameters via Bayes rule or exponential smoothing.

A common characteristic of the above works is that the inventory manager has access to the realized demand, in order to update her beliefs. 
During stock-outs, however, it is often the case that excess demand is lost, making it very hard to measure or estimate the realized demand. 
In other words, on many occasions it may be more realistic to assume that the inventory manager has access only to the sales, that is, 
\textit{censored demand} data. The main insight here is that a dynamic analysis is required even in the case of a perishable product, and 
that the optimal inventory decision is higher than that of a Bayesian myopic policy, a phenomenon that is referred to in the literature as 
``information stalking.'' The intuition behind it is that this additional inventory gives, occasionally, some extra uncensored demand 
samples, which contribute towards learning the true parameter values and are, thus, useful in the future. Consequently, some level of 
``experimentation/exploration'' is necessary when dealing with censored demand. This result was first proved in \cite{HLW82} in the context 
of a perfectly competitive firm making output decisions in the presence of demand uncertainty, and later cast in an inventory management 
setting and strengthened in \cite{DPB02} and \cite{LSZ08}.\footnote{Interestingly, a similar insight has been derived in a dynamic pricing 
context in \cite{BO94}.} \cite{LP99} derives a closed-form expression for the Bayesian optimal inventory level if the demand belongs to the class of ``newsvendor 
distributions'' developed in \cite{BF91}, and confirms that it is optimal to enhance learning through stocking higher. Recently, \cite{BCMpre}, 
building on the framework of \cite{LP99}, provides both analytical and numerical evidence to the fact that, while there is cost in being 
myopic (instead of far-sighted, in the Dynamic Programming sense), this cost is actually quite small. Hence, a Bayesian myopic policy is 
near-optimal, apart from being easy to implement. Moreover, the cost of censoring, despite being not too large either, is about an order of 
magnitude greater than the cost of myopia, so the inventory manager should direct her efforts in measuring/estimating lost sales. 

The picture becomes more complicated in the case of nonperishable products: the inventory carried over from previous periods may force the inventory manager to stock higher or lower compared to the Bayesian myopic benchmark; see \cite{CP08}, where the effect of substitutable products is also studied. In a follow-up work, \cite{C10} develops improved bounds and heuristics for the problem.

A fundamental limitation, and the standard criticism against the parametric approach, is that if the parametric family adopted is not 
broad/flexible enough to capture the underlying demand process, estimating the best parameter values is of little help in really learning 
the demand, and then managing the inventory in a cost-effective way. Hence, in parallel to the aforementioned parametric approach to 
stochastic inventory theory with uncertainty regarding the demand distribution, a literature following a \textit{nonparametric approach} has 
also been developing. The setting here is one where the inventory manager has no prior information on the demand, other than the fact that 
it is mutually independent (and in some cases, also identically distributed) over different time periods, and she has access to censored 
demand data. \cite{BS00} and \cite{KT08} propose stochastic approximation algorithms for ordering and pricing, and prove their asymptotic 
optimality, without providing any rates of convergence though.  Adaptive value-estimation methods that take advantage of the convexity of 
the newsvendor cost function are presented in \cite{GP01} and \cite{PRT04}. Again, the convergence of these algorithms to the optimal 
solution is established, but the rate of convergence is explored only numerically. \cite{HLRO11} proves the asymptotic optimality of a myopic 
inventory control policy based on the Kaplan-Meier estimator, and compares the performance that it achieves to previously proposed policies through extensive computational experiments and for various classes of distributions. Finally, in the related setting of a repeated 
stochastic capacity control problem, \cite{vRM00} proposes a stochastic approximation algorithm, whereas \cite{ME15} introduces a methodology 
based on maximum entropy distributions. Similarly to all the works in this thread, the asymptotic convergence of these algorithms is established 
but the rates of convergence are not addressed.

The papers that come closest to our work are \cite{HR09} and \linebreak \cite{BM13}. Both consider the inventory management of a perishable 
product, in other words, 
a repeated newsvendor problem over $T$ time periods, and follow the nonparametric approach: the manager has no prior information on the 
demand - assumed to be independent and identically distributed over different time periods, drawn from an unknown distribution - and has 
access only to censored demand data. The objective is to minimize the expected regret, that is, the difference between the expected incurred 
cost and the optimal expected cost, had the demand distribution been known a priori. \cite{HR09} proposes an adaptive inventory management 
algorithm based on the methodology of online convex optimization (OCO); see \cite{Z03}. This algorithm has expected regret 
that scales as $O\left(\sqrt{T}\right)$, which is the minimax optimal scaling. This can be improved to $O\left(\log T \right)$ if the demand has 
a continuous density, uniformly bounded away from zero. We note that the stationarity of the underlying demand is not crucial for OCO,\footnote{In fact, 
\cite{Z03} does not make this assumption.} but the fact that demand and inventory are continuous quantities is important: 
this is a gradient descent-type algorithm, and the continuity of state and action spaces implies that a direction of cost improvement is available almost surely, 
irrespective of censoring. Consequently, in the case of discrete demand, their methodology requires the existence of a lost-sales indicator 
to recover the $O\left(\sqrt{T}\right)$ scaling of the expected regret. As we illustrate in our numerical experiments, in the absence of such an indicator,
this approach cannot guarantee sublinear regret.

On the other hand, the objective in \cite{BM13} is to understand the impact of the available information/feedback structure (fully observable/censored/partially 
censored demand) on the optimal scaling of the expected regret.\footnote{Relatedly, following the parametric approach, \cite{BG15} and \cite{JRW15} compare the 
optimal inventory management policies under different information levels (observable lost sales/unobservable lost sales but observable stock-out times/unobservable 
lost sales and stock-out times), and conclude that through additional information, improved performance can be achieved. In a different direction 
but still in a parametric setting, \cite{M15} studies the impact of inventory record inaccuracy when combined with censored demand samples.} 
In the case of discrete demand and censored observations, which is most relevant to our work and often the case in practice, the authors 
develop an algorithm based on alternating exploration and exploitation intervals whose expected regret scales as $O\left(\log T \right)$, 
which is the minimax optimal scaling.\footnote{The discrepancy between the results in \cite{HR09} and \cite{BM13} in the case of discrete demand 
stems from the fact that the latter study makes somewhat stronger assumptions regarding the families of distributions allowed. In 
particular, it is assumed that the expected cost function is not ``too flat'' around the optimal ordering quantity, that is, the separation between the 
optimal and the best suboptimal ordering quantity is bounded from below by some $\epsilon$, and the upper bound on the 
expected regret of their algorithm also scales as $O\left(1/\epsilon^2\right)$.} Overall, the proposed algorithm is well equipped to deal with the 
discreteness of the demand, and the issues it may create when combined with censoring, but the stationarity of the underlying demand 
process seems to be crucial here: the exploration intervals could lead to very poor inferences, and hence performance during the subsequent 
exploitation intervals, if the demand is nonstationary.

Finally, in a related setting and with the same motivation as our work, is the recent contribution \cite{BGZ15}. The authors develop a modeling 
and algorithmic framework for non-stationary stochastic optimization, an adaptation of the OCO setting where: (i) the actions 
of the nature/adversary are constrained by a variational budget; (ii) the comparator sequence is not static (i.e., a fixed action) but rather a dynamic 
oracle, a feature that bodes well with their focus on non-stationary environments. One of the main insights of the paper is to show that algorithms 
that perform well for OCO, can be successfully adapted for non-stationary stochastic optimization.
On the other hand, the theory is developed in a quite abstract setting, so it does not take advantage of the special structure 
of the inventory management problem at hand. Moreover, and quite importantly, extending their approach to settings with limited feedback, 
such as censored data, is a challenging task, e.g., \cite{BGZ14} explores the case of bandit feedback. As discussed earlier in the approach 
of \cite{HR09}, OCO-based algorithms may not have guaranteed performance if a direction of cost improvement is not always available, e.g., 
through a lost-sales indicator.

Our goal is to fill the conceptual gap that exists between these influential works, that is, to devise an inventory management policy that 
learns from censored data without making any parametric assumptions, and which has guaranteed performance under discrete and 
nonstationary\footnote{It may be worthwhile to mention a thread in the literature that develops 
online approximation algorithms for inventory management problems, which are based on marginal cost accounting and cost balancing, and have 
guaranteed worst-case performance; see \cite{LPRS07}, \cite{LJN08}, and several follow-up works. While their framework allows for correlated 
and nonstationary demand processes, which is a big part of our motivation, there is no learning, and hence no censoring problem in 
their case: given a realization of past demand and possibly additional information, a conditional joint distribution of future 
demand is assumed to be available.} demand; in fact, near-optimal performance in terms of the regret criterion. Both features are quite 
important in practice, and as we elaborate below, jointly, they require a different methodological approach than the ones existing in the 
literature on stochastic inventory theory. Moreover, in existing works, action space (orders) and outcome space (demand) coincide. In practice, 
however, there may be only few, predetermined ordering levels, for example, due to fixed ordering cost. Accordingly, we disentangle the two, 
and provide more refined results that highlight the scaling of the expected regret not only with respect to the number of time periods, but also 
with respect to the number of ordering decisions available and the size of the demand support. We note that our approach has guaranteed 
performance not only with respect to the standard notion of regret, which is based on the best fixed action, in hindsight, but also with respect 
to the tracking regret, a much stronger benchmark. Furthermore, we discuss the possibility of extending our approach to even stronger benchmarks, 
such as the dynamic regret of \cite{BGZ15}.

%==========================================================================================================================================
%==========================================================================================================================================

\bigskip\subsection{A Nonstochastic Approach to Dynamic Learning Problems} \label{OnLearn}

 To understand why, and how, our methodological approach deviates from previous literature on stochastic inventory theory, let us put the 
repeated newsvendor problem with demand learning in a broader context, that of \textit{sequential prediction}: a forecaster observes one 
after the other the elements of a sequence of outcomes $y_1, y_2, \ldots$ Before the $t^{th}$ element of the sequence is revealed, 
the forecaster predicts its value $y_t$ on the basis of the previous $t-1$ observations. In the ``classical'' statistical theory of sequential 
prediction, the sequence of outcomes is assumed to be a realization of a stationary stochastic process. Under 
this hypothesis, statistical properties of that process may be estimated on the basis of the sequence of past observations, and effective 
prediction rules/policies can be derived from these estimates.\footnote{For instance, in the context of the repeated newsvendor model with 
time-stationary demand whose underlying distribution is unknown, one would use past observations to estimate the critical quantile of the 
demand distribution, which is the optimal solution when that distribution is known.} In such a setting, the performance of a policy is 
captured through the expected value (with respect to the probability distribution governing the process) of some loss function measuring the 
discrepancy between predicted values and true outcomes.

On certain occasions though, an underlying probabilistic structure (in particular, time stationarity) may be hard to justify or estimate. Then, an 
alternative approach is to view the sequence $y_1, y_2, \ldots$ as the product of some unknown and unspecified mechanism. In lack of any 
probabilistic assumptions on the sequence, the goal is to come up with prediction rules/policies that perform reasonably well for every possible 
outcome sequence, that is, to predict \textit{individual sequences} uniformly well. The lack of probabilistic assumptions also raises the 
question of how to quantify the forecaster's performance. To provide a baseline for measuring performance in this setting, one may 
introduce a class of reference forecasters or ``experts.'' These experts make their predictions available to the forecaster before each 
outcome is revealed, and the forecaster makes her own prediction based on the advice of the different experts. The goal of the forecaster is 
to keep her cumulative loss close to that of the best expert in hindsight (i.e., with full knowledge of the entire sequence of outcomes), or in 
other words to minimize her \textit{regret}, uniformly over all outcome sequences.

The literature on prediction of individual sequences originates from repeated games: several of the basic ideas are 
introduced in the early influential works \cite{B56} and \cite{H57}, such as the use of randomization as a powerful tool to achieve low regret 
when it would be impossible otherwise. In a different strand of literature, \cite{C65}, \cite{LZ76}, and \cite{ZL77} give the 
information-theoretic foundations of sequential prediction of individual sequences, motivated by the problems of data compression and ``universal'' coding. More 
recently, the prediction of individual sequences has become a topic of intense research activity in the subfield of machine learning termed 
\textit{online learning}. The pioneering papers \cite{dSMW88}, \cite{LW89}, and \cite{V90} illustrate how the framework of prediction with 
expert advice can be transformed into a model of online learning, and a plethora of subsequent works builds on and extends their results; for 
a comprehensive treatment of the topic the reader is referred to \cite{CBL06}. Let us note that in this setting, typically, each expert is 
associated with a different action (e.g., predicted outcome), so the regret of the learner (e.g., the forecaster) is measured against the 
best fixed action in hindsight.

Of particular interest to us is the literature on \textit{partial monitoring}, a special case of the online learning paradigm where the 
information available to the learner is limited in some way. A notable member of this class is the (nonstochastic) multi-armed bandit problem 
studied in \cite{ACFS02}, where the actual outcome is not revealed to the learner after each round. What the learner knows with 
certainty is the loss of the actions that she takes, but she has no information on the losses of other actions she could have chosen instead. Hence, she 
has to ``explore'' in order to learn the losses associated with different actions, and to ``exploit'' by converging sooner rather than later 
to the ones she believes have the smallest loss. The main result is that a simple randomized policy, termed the \textit{Exponentially 
Weighted Forecaster} (EWF), achieves expected regret\footnote{We note that when one refers to expected regret in the context of prediction 
of individual sequences, the expectation is taken with respect to the randomization induced by the forecasting policy; in essence, with 
respect to the beliefs of the forecaster. This is because in a nonstochastic setting there is no ground-truth or benchmark distribution 
to compare against.} that scales as $O\left(\sqrt{T}\right)$, which is also the optimal scaling for the particular problem. 
\citet{PS01} extends this setting significantly, to a repeated game between a learner and an opponent, making sequential decisions in a 
finite action and outcome space, respectively, with the loss and the feedback that the learner receives at the end of each round being 
arbitrary (but time invariant) functions of the action and outcome chosen during that round. The authors prove that under a relatively mild 
technical assumption, an adaptation of the EWF achieves expected regret that scales as $O\left(T^{3/4}\right)$. \citet{CLS06}
improves this bound to $O\left(T^{2/3}\right)$, and also provides a lower bound of the same order for a specific feedback/loss structure. The 
question of the exact dependence of the minimax regret on the problem structure was first tackled in \citet{BPS11}; see also \citet{ABPS13} for the case of 
stochastic outcomes. This series of research efforts culminated in \cite{BFPRC14}, where a complete classification of finite 
partial monitoring games is provided: ``trivial'' games where the expected regret does not scale with $T$; ``easy'' 
games where the scaling is $\Theta\left(\sqrt{T}\right)$; ``hard'' games where the scaling is $\Theta\left(T^{2/3}\right)$; and ``hopeless'' 
games with scaling $\Theta\left(T\right)$. Importantly, there can be no other scaling apart from the aforementioned four, despite the fact 
that the structure of the game, in terms of both the loss and the feedback functions, can be chosen arbitrarily.
A geometric condition, termed local observability, is shown to distinguish ``easy'' from ``hard'' games, and generic algorithms 
are developed for each case, albeit significantly more complicated than the EWF.
 
It is worthwhile to mention the sole paper on the repeated newsvendor problem with demand learning that adopts the nonstochastic 
framework, \cite{LLMNV10}. In their setting, however, there is no censoring, while demand and orders are continuous quantities; both crucial 
features as explained earlier.

%==========================================================================================================================================
%==========================================================================================================================================

\bigskip\subsection{Main Contributions}  \label{Contrib}

As already mentioned above, we focus on the multi-period inventory management problem of a perishable product, that is, a repeated newsvendor problem, where no information on the demand is available a priori, and learning occurs via sales/censored data. Importantly, demand and orders are discrete quantities, and no probabilistic assumptions (in particular, independence or time stationarity assumptions) are made regarding the demand. Our main contributions can be summarized as follows.
\begin{itemize}
\item[(1)] We show that the simple EWF policy has expected regret that scales as $O \left(\sqrt{T \log T}\right)$, which is optimal 
up to the logarithmic term. Note that even in the case where the demand over 
different time periods is i.i.d., no better scaling than $\Omega \left(\sqrt{T}\right)$ can be achieved, unless further assumptions are 
made on the demand; see Section 2.5 in \cite{HR09} and Section 2.3 in \cite{BM13}. 
\item[(2)] We disentangle the impact of the cardinality of the 
action space (orders) from that of the outcome space (demand). In particular, we assume that there are $N$ ordering points that the inventory 
manager can choose from at every period, arbitrarily placed within the demand support. We show that the expected regret of the EWF policy 
scales as $O\left(\log N \right)$, that is, near-optimally. Notably, the general theory developed in \citet{B13} and \citet{BFPRC14} guarantees only polynomial 
scaling with respect to $N$. Crucial to the performance of the EWF policy is a carefully designed cost estimator, which leverages 
the special structure of the problem.
\item[(3)] The previous results allow us to reach an important conclusion about the dynamic learning problem at hand: the benefit from 
``information stalking'' as well as the cost of censoring are both negligible, at least with respect to the regret criterion. Intuitively 
speaking, this seems to extend and solidify the insights in \cite{BCMpre}, where in a time-stationary setting and for the class of 
``newsvendor'' distributions, the benefit from ``information stalking'' has been shown to be very small, and the cost of censoring not 
significant either.
\item[(4)] We modify the proposed policy so that it has guaranteed performance with respect to the tracking regret, that is, using as 
benchmark the best sequence of inventory decisions that switches a limited number of times. The tracking regret is a much stronger 
benchmark, particularly suitable for nonstationary demand models as our numerical experiments suggest. The tradeoff is a somewhat looser 
bound: while the scaling of the expected tracking regret with respect to $T$ and $N$ remains optimal up to logarithmic terms, the upper 
bound now includes a multiplicative term that relates to the number of times that the reference sequence is allowed to switch actions.
\item[(5)] We consider the ``combinatorial'' version of the repeated newsvendor problem, that is, single-warehouse multi-retailer inventory management of a perishable product, with facility-dependent fixed ordering costs and overage/underage cost rates. We extend our approach so that, again, it achieves expected regret that scales as $O \left(\sqrt{T \log T}\right)$ and $O\left(\log N \right)$, respectively, that is, near-optimally in both cases.
\end{itemize}

%==========================================================================================================================================
%==========================================================================================================================================

\bigskip\subsection{Outline of the Paper}

The remainder of the paper is organized as follows. Section 2 provides a detailed description of our benchmark model, a repeated newsvendor 
model with demand learning via censored data, gives some necessary background on online learning, and presents the proposed inventory 
management policy accompanied by a regret analysis. Section 3 introduces the notion of tracking regret, and shows that a modification of the 
proposed policy has guaranteed performance with respect to the latter criterion. This is followed by extensive numerical experiments in 
Section 4. Section 5 presents the ``combinatorial'' version of the repeated newsvendor problem, that is, single-warehouse multi-retailer 
inventory management of a perishable product, again incorporating demand learning through censored observations from a nonparametric 
viewpoint. We conclude the paper with a brief commentary in Section 6. All major proofs are relegated to an appendix, at the end of the 
paper.

%==========================================================================================================================================
%==========================================================================================================================================

\bigskip\section{Inventory Management with Censored Demand Data}

\subsection{Problem Formulation}

Fix $T, D \in \N$, and define the sets $\T = \{1, 2,\ldots, T\}$ and $\D = \{0, 1,\ldots, D\}$. Also, fix $N \in \N$, with $N \leq |\D|=D+1$, and let $\I$ be an arbitrary subset of $\D$ with cardinality $N$. We denote by $\mathbb{I}_E$ the indicator variable of event $E$, and by $(x)^+$ the maximum of scalar $x$ and 0.

Consider a firm that sells a single perishable product to the market. The following ``game'' between the firm's inventory manager and the 
market is repeated over $T$ time periods: at the beginning of period $t \in \T$, the inventory manager chooses an inventory level $I_t \in 
\I$ to have in stock. (So, implicitly, we assume that there is zero lead time between placing and receiving an order.) Simultaneously, the 
market chooses the demand that the firm experiences during that period, $d_t \in \D$, which is covered up to the extent that the available 
inventory allows. At the end of the period any remaining inventory perishes, and the firm incurs a cost
\begin{equation}
c(I_t,d_t) = h (I_t - d_t)^+ + b (d_t - I_t)^+, \label{newscost}
\end{equation}
where $h, b>0$ represent the overage and underage cost rates, respectively, which are known to the manager and fixed for all time periods.

An important characteristic of our model is that the inventory manager has no information about the demand prior to the beginning of the game. Moreover, she has to ``learn'' the demand via \textit{censored data}: at the end of period $t$, the inventory manager can only observe the sales during that period, $\min\left\{I_t,d_t\right\}$. In particular, if the inventory $I_t$ turns out to be less than or equal to the demand, then the inventory manager does not know with certainty the exact demand that the firm experienced, $d_t$, nor the exact cost that it incurred, $c(I_t,d_t)$.

Fix a sequence of demand realizations $\{d_t\}$. We define the \textit{regret} of the inventory manager, for any sequence of inventory 
decisions $\{I_t\}$, to be the difference between the cumulative cost that is actually incurred and the cost that would have been incurred under 
the best fixed inventory decision, in hindsight:
\[
\R(T) = \sum_{t \in \T} c(I_t,d_t) - \min_{i \in \I} \sum_{t \in \T} c(i,d_t),
\]
where $c(i,d_t)$ is defined similarly to Eq.\ \eqref{newscost}. We denote by $i^*$ the minimizer in the equation above, omitting its dependence on the sequence $\{d_t\}$ for convenience.

An important point of our work is that when no probabilistic assumptions are made about the demand, that is, if one adopts the so-called 
\textit{nonstochastic} viewpoint, then in many cases randomization is the only way to achieve low regret. Of course, under a randomized 
inventory management policy the regret is a random variable, so our goal is to design policies that have low expected regret:
\begin{equation}
\E[\R(T)] = \EE{\sum_{t \in \T} \sum_{i \in \I} p_i(t) \big(c(i,d_t) - c(i^*,d_t)\big)}, \label{expreg}
\end{equation}
where $p_i(t)$ denotes the probability of selecting inventory level $i \in \I$ at the beginning of time period $t \in \T$, conditional on 
all previous decisions made, that is, $p_i(t) = \P(I_t = i \mid I_1, I_2,\ldots,I_{t-1})$.

%=============================================================================================================
%=============================================================================================================

\bigskip\subsection{Feedback Structure and Local Observability}

Before we proceed to the presentation and performance analysis the proposed inventory management policy, we delve into the 
feedback structure of the problem and make a connection to the notion of \textit{local observability}, as first defined by \citet{BPS11}. 
Understanding and exploiting this structure is precisely what allows for good (near-optimal) scaling of the expected regret with respect to 
$T$.

Consider an arbitrary pair of inventory decisions $i, j \in \I$, and a demand realization $d \in \D$. We wish to compute the difference between the cost of the two actions. Without loss of generality, assume that $i>j$. (If $i=j$ then, obviously, the difference in cost is zero.) We have that:

\begin{itemize}
\smallskip\item[(i)] $c(i,d) - c(j,d) = h (i-j)$, if $d \leq j$;

\smallskip\item[(ii)] $c(i,d) - c(j,d) = h (i-d) - b (d-j) = h i + b j -(h+b) d$, if $j < d < i$;

\smallskip\item[(iii)] $c(i,d) - c(j,d) = b (j-i)$, if $d \geq i$.
\end{itemize}

On the other hand, let $k \in \N$, with $k \leq D$, and denote by $\mathbb{L}_k$ the $k \times k$ identity matrix, and by $\mathbb{M}_k$ the $k \times (D-k)$ matrix, where $\mathbb{M}(i,j)=1$ if $i=k$, and 0 otherwise. Finally, let $e_d$ be the $D+1$-dimensional column vector, with $e_d(j)=1$ if $j=d$, and 0 otherwise.

The \textit{signal matrix} of inventory decision $i \in \I$, denoted by $\S_i$, is a $(i+1) \times (D+1)$ matrix, where element $\S_i(k,j)=1$ if the sales of the firm are equal to $k \in \{0, 1,\ldots, i\}$, assuming that the demand is equal to $j \in \D$ and the inventory is equal to $i$, and 0 otherwise. It can be verified that $\S_i$ is equal to the concatenated matrix:
\[
\S_i = \big[\mathbb{L}_{i+1} \mid \mathbb{M}_{i+1}\big].
\]

The essence of local observability is that the difference in cost between any two inventory decisions, for any demand realization, can be expressed in terms of their signal matrices, that is, the information that the inventory manager receives in the respective cases via the firm's sales.

\medskip\begin{lemma} \label{locobs}
(Local Observability) Let $i,j \in \I$ be arbitrary inventory decisions. There exist vectors $v_i\in \real^{i+1}$ and $v_j\in \real
^{j+1}$ such that
\[
\left(v_i^{T} \S_i - v_j^{T} \S_j\right) e_d = c(i,d) - c(j,d), \qquad d \in \D.
\]
\end{lemma}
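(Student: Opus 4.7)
The plan is to exploit the identity
\[
c(i,d) = h(i-d)^+ + b(d-i)^+ = hi - (h+b)\min(i,d) + bd,
\]
which is easily verified by splitting into the cases $d \le i$ and $d > i$. The decisive feature of this rewriting is that the term $bd$ is independent of the inventory level, so it cancels in any cost difference $c(i,d) - c(j,d)$; and the remaining piece $hi - (h+b)\min(i,d)$ is an affine function of the sales $\min(i,d)$, which is precisely the piece of information that the signal matrix $\S_i$ records in its columns.

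Guided by this observation, I would define
\[
v_i \in \real^{i+1}, \qquad v_i(k) = hi - (h+b)k, \quad k \in \{0,1,\ldots,i\},
\]
and likewise $v_j$. Reading off the structure $\S_i = \bpa{\mathbb{L}_{i+1}\mid \mathbb{M}_{i+1}}$, one sees that $\S_i e_d$ equals the standard basis vector indicating coordinate $d$ when $d \le i$, and the last standard basis vector (coordinate $i$) when $d > i$; in either case $v_i^{T} \S_i e_d = v_i(\min(i,d))$. Substituting the chosen $v_i, v_j$ then gives
\[
v_i^{T} \S_i e_d - v_j^{T} \S_j e_d = \bpa{hi - (h+b)\min(i,d)} - \bpa{hj - (h+b)\min(j,d)},
\]
and the decomposition from the first paragraph shows that the right-hand side equals $c(i,d) - c(j,d)$ for every $d \in \D$, as required.

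I do not expect any genuine obstacle in the verification: once the decomposition is spotted, the rest is one line of algebra. The only mildly nontrivial step is locating the right $v_i$. Two equivalent routes lead to the same answer: either start from the identity above, or work directly from the three-case expansion of $c(i,d) - c(j,d)$ already recorded in the text (namely $h(i-j)$, $hi + bj - (h+b)d$, and $b(j-i)$ on the regions $d \le j$, $j < d < i$, and $d \ge i$) and solve the resulting linear system in the coordinates of $v_i$ and $v_j$; consistency across the three regions forces $v_i(k) = hi - (h+b)k$ (in particular $v_i(i) = -bi$), matching the unified formula proposed above.
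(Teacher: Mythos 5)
Your proof is correct and takes essentially the same route as the paper: you choose the very same vector $v_i$ (the paper writes $v_i(k)=hi-(h+b)(k-1)$ for $k\in\{1,\dots,i+1\}$, which is your $hi-(h+b)k$ under zero-based indexing), and your decomposition $c(i,d)=hi-(h+b)\min(i,d)+bd$ is just an explicit writing-out of the ``straightforward calculations'' the paper leaves to the reader.
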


\begin{proof}
Consider the $(i+1)$-dimensional column vector $v_i$, where $$v_i(k)=h i - (h+b)(k-1),\qquad k \in \{1, 2,\ldots, i+1\}.$$ Define similarly the vector $v_j$. The result follows through straightforward calculations.
\end{proof}

Lemma \ref{locobs} implies that the game between the inventory manager and the market is locally observable, in the sense of Definition 6 in \cite{BFPRC14}. This classifies the repeated newsvendor problem with demand learning via censored data as an ``easy'' partial monitoring problem - see Section \ref{OnLearn} - which implies that the correct scaling of the expected regret is $\Theta \left(\sqrt{T}\right)$. Below, we introduce a simple policy that is near-optimal with respect to this criterion.

%=============================================================================================================
%=============================================================================================================

\bigskip\subsection{The Exponentially Weighted Forecaster}

The Exponentially Weighted Forecasting (EWF) is a well-studied online learning methodology that simultaneously ``explores'' and 
``exploits,'' in a randomized way. The main idea behind it is to keep track of not only the cost of actions that are actually taken, but 
also of the estimated cost of all other actions that could have been taken instead. Of course, the specifics of cost estimation are 
context-specific, as they are closely tied to the type of feedback that the learner receives. Based on the cumulative estimated cost of the 
different actions, the learner forms beliefs about the chances each of them has being the best one, in hindsight, and prioritizes future 
actions accordingly.

More concretely, let $\wt c(i,d_t)$ be the estimated cost that inventory decision $i \in \I$ would have incurred at period $t \in \T$ 
under demand $d_t$. Note that, implicitly, $\wt c(i,d_t)$ may also be a function of the actual inventory decision $I_t$ that was made at period
$t$. In fact, that is the case in the cost estimator that we propose below.
Similarly, we define $\wt C_i(t)$ as the cumulative estimated cost of (fixed) inventory decision $i \in \I$ at period 
$t \in \T$, with $\wt C_i(0)=0$. The cumulative estimated cost can be computed through the recursion:
\[
\wt C_i(t) = \wt C_i(t-1) + \wt c(i,d_t), \qquad i \in \I.
\]

For convenience, let us also define $W_i(t) = e^{-\eta \wt C_i(t)}$ and $W(t) = \sum_{i \in \I} W_i(t)$, where $\eta$ is a positive 
constant whose exact value depends on the primitives of the problem in a way that is specified later on. Using this notation, we have that
\begin{equation}
W_i(t) = W_i(t-1) e^{-\eta \wt c(i,d_t)}, \qquad i \in \I. \label{dyn}
\end{equation}

The EWF policy chooses inventory $I_t = i$ with probability
\begin{equation}
p_i(t) = (1-\gamma) \frac{W_i(t-1)}{W(t-1)} +\frac{\gamma}{N}, \qquad i \in \I, \label{ewf}
\end{equation}
where $\gamma$ is another parameter, in the $(0,1)$ interval, whose precise value will be determined later.

Note that the EWF policy simultaneously ``explores'' the available action space by making every inventory decision with probability at 
least $\gamma / N$, and ``exploits'' by assigning higher probability to decisions that have low cumulative estimated cost. The precise way 
that the inventory manager prioritizes between exploration and exploitation depends on the exact values of the $\eta$ and $\gamma$ parameters.

While the EWF is a generic and well-studied policy, what takes advantage of the special structure of the problem at hand is the design of 
the proper cost estimator $\wt c(i,d_t)$. To get some insight into what type of estimator may be suitable, let us assume that at period 
$t \in \T$ the inventory manager decides to hold inventory $I_t$. At the end of period $t$, the firm gets (potentially censored) feedback 
about the demand, that is, the sales $\min\{I_t,d_t\}$. Importantly, this feedback also gives information about the sales that the firm would have 
had during the particular period, had the inventory manager chosen any $i \leq I_t$:
\begin{itemize}
\smallskip\item[(i)] if the feedback was censored, that is, $d_t \geq I_t$, then $\min\{i,d_t\}=i$, for all $i \leq I_t$;

\smallskip\item[(ii)] if the feedback was not censored, that is, $d_t<I_t$, then the demand $d_t$ is known with certainty and the sales 
$\min\{i,d_t\}$ can be computed, for all $i \in \I$.
\end{itemize}

\smallskip We use this insight to define the estimated cost of action $i$ under demand $d_t$ as follows:
\begin{equation}
\wt c(i,d_t) = \frac{\II{I_t \geq i}}{\P_t(I_t \geq i)} \left(v_i^{T} \S_i e_{d_t} + \beta \right),  \qquad i \in \I, \label{est}
\end{equation}
where $v_i$ and $\S_i$ are taken from Lemma \ref{locobs}, $\beta = D \cdot \max\{h,b\}$, and $\P_t(I_t \geq i) = \sum_{j \in \I: j \geq i} 
p_j(t)$ according to Eq.\ \eqref{ewf}.

Next, we provide some properties regarding the bias and variance of the above estimator, which are critical in the performance analysis that follows.

\medskip\begin{lemma} \label{estprop}
(Bias and Variance of the Estimator) The cost estimator in Eq.\ \eqref{est} satisfies:
\[
\E_t \left[\wt c(i,d_t)\right] = v_i^{T} \S_i e_{d_t} + \beta,
\]
so that $\E_t \left[\wt c(i,d_t)\right] \in (0,2 \beta)$, and
\[
\E_t \left[\wt c(i,d_t)^2\right] \leq \frac{4 \beta^2}{\P_t(I_t \geq i)},
\]
where $\P_t(\cdot)$ and $\E_t[\cdot]$ respectively denote the probability and the expectation conditioned on the history of interaction up 
until the beginning of round $t$.
\end{lemma}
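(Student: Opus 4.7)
The key observation driving both claims is that $v_i^{T} \S_i e_{d_t}$ admits a clean closed form, namely $c(i,d_t) - b\, d_t$. My first step is to verify this identity by splitting into the two natural cases. When $d_t \leq i$, the vector $\S_i e_{d_t}$ picks out the $(d_t+1)$-th standard basis vector of $\real^{i+1}$ (since sales $=\min\{i,d_t\}=d_t$), and using the formula $v_i(k) = h i - (h+b)(k-1)$ from Lemma \ref{locobs} gives $v_i^T \S_i e_{d_t} = h(i - d_t) - b\, d_t$; when $d_t > i$, sales are censored at $i$, so $\S_i e_{d_t}$ picks out the $(i+1)$-th basis vector and $v_i^T \S_i e_{d_t} = -b i = b(d_t - i) - b\, d_t$. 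In both cases the expression equals $c(i,d_t) - b\, d_t$.

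Next, I compute the conditional expectation. The only randomness in $\wt c(i,d_t)$ (given the history up to time $t$) is in $I_t$, since $d_t$ is set by the environment and is therefore treated as fixed. Hence
\[
\E_t\!\left[\wt c(i,d_t)\right] \;=\; \bpa{v_i^{T} \S_i e_{d_t} + \beta}\cdot \E_t\!\left[\frac{\II{I_t \geq i}}{\P_t(I_t \geq i)}\right] \;=\; v_i^{T} \S_i e_{d_t} + \beta,
\]
using $\E_t[\II{I_t\geq i}] = \P_t(I_t\geq i)$. Plugging in the identity from step one, the expectation equals $c(i,d_t) + (\beta - b\,d_t)$. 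Since $c(i,d_t) \in [0, \beta]$ (as $c(i,d_t) \leq \max\{h,b\}\cdot|i-d_t| \leq \max\{h,b\}\cdot D = \beta$) and $\beta - b\,d_t \in [0,\beta]$ (because $bd_t \leq bD \leq \beta$), the sum lies in $[0, 2\beta]$, which gives the stated bound.

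For the second-moment bound, the same randomness observation yields
\[
\E_t\!\left[\wt c(i,d_t)^2\right] \;=\; \bpa{v_i^{T} \S_i e_{d_t} + \beta}^2 \cdot \E_t\!\left[\frac{\II{I_t \geq i}}{\P_t(I_t \geq i)^2}\right] \;=\; \frac{\bpa{v_i^{T} \S_i e_{d_t} + \beta}^2}{\P_t(I_t \geq i)},
\]
and substituting $(v_i^T \S_i e_{d_t} + \beta) \leq 2\beta$ from the preceding paragraph gives $4\beta^2/\P_t(I_t \geq i)$, as claimed.

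\textbf{Where the work is.} There is no deep obstacle here; the entire argument is elementary once one writes down the two pieces of $\S_i e_{d_t}$ explicitly. The only step that requires any care is the case split for $v_i^{T}\S_i e_{d_t}$, since the censoring boundary at $d_t = i$ is precisely where the expression for sales changes form; verifying that the two cases collapse to the single closed form $c(i,d_t) - bd_t$ is the substantive content. The additive constant $\beta$ in the estimator is then exactly what is needed to make the conditional mean nonnegative and bounded by $2\beta$, which in turn is what controls the second moment and will later feed into the standard EWF regret analysis.
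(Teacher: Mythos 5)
Your proof is correct and follows essentially the same route as the paper's: the conditional expectation is computed by pulling out the deterministic factor $v_i^{T}\S_i e_{d_t}+\beta$ and using $\E_t[\II{I_t\ge i}]=\P_t(I_t\ge i)$, and the bounds follow from $|v_i^{T}\S_i e_{d_t}|\le\beta$. The only difference is that you explicitly verify the identity $v_i^{T}\S_i e_{d_t}=c(i,d_t)-b\,d_t$ via the case split at the censoring boundary, whereas the paper simply asserts the bound $|v_i^{T}\S_i e_{d_t}|\le\beta$; your extra step is a harmless (and arguably welcome) elaboration, not a different argument.
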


\begin{proof}
The first part of the lemma follows directly by noting that
\[
\E_t \left[\wt c(i,d_t)\right] = \frac{\E_t \left[\II{I_t \geq i}\right]}{\P_t(I_t \geq i)} \left(v_i^{T} \S_i e_{d_t} + \beta 
\right) = v_i^{T} \S_i e_{d_t} + \beta.
\]
The fact that $\E_t \left[\wt c(i,d_t)\right] \in (0,2 \beta)$ is a direct consequence of $\beta$ being an upper bound on the absolute value of $v_i^{T} S_i e_{d_t}$, for every $i \in \I$ and $d_t$. Hence, regarding the second part of the lemma, we have that
\[
\E_t \left[\wt c(i,d_t)^2\right] = \frac{\E_t \left[\II{I_t \geq i}\right]}{\P_t(I_t \geq i)^2} \big(v_i^{T} \S_i e_{d_t} + \beta 
\big)^2 \leq \frac{4 \beta^2}{\P_t(I_t \geq i)}.
\]
\end{proof}

\medskip Note that the proposed estimator is biased, as $\E_t \left[\wt c(i,d_t)\right] \neq c(i,d_t)$. In particular, the estimator is 
pessimistic in the sense that it always overestimates the actual cost incurred. A direct corollary of Lemmas \ref{locobs} and 
\ref{estprop} is the following result.

\medskip\begin{lemma} \label{unbias}
The cost estimator in Eq.\ \eqref{est} is unbiased when inferring the difference in cost between two actions:
\[
\E_t \left[\wt c(i,d_t) - \wt c(j,d_t)\right] = c(i,d_t) - c(j,d_t), \qquad i, j \in \I. 
\]
\end{lemma}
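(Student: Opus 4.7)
The plan is to combine Lemma \ref{locobs} and the first part of Lemma \ref{estprop} directly, since both ingredients have already been established. By the linearity of conditional expectation,
\[
\E_t \left[\wt c(i,d_t) - \wt c(j,d_t)\right] = \E_t \left[\wt c(i,d_t)\right] - \E_t \left[\wt c(j,d_t)\right].
\]
Then I would invoke Lemma \ref{estprop} to replace each of the two conditional expectations on the right-hand side by $v_i^T \S_i e_{d_t} + \beta$ and $v_j^T \S_j e_{d_t} + \beta$, respectively. The additive constants $+\beta$ cancel—this is precisely the reason for including $\beta$ in the estimator in the first place: it guarantees positivity (useful for the variance/exponentiation arguments later) while being harmless for pairwise comparisons.

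After this cancellation, what remains is $(v_i^T \S_i - v_j^T \S_j) e_{d_t}$, which by Lemma \ref{locobs} equals $c(i,d_t) - c(j,d_t)$. This completes the identification and yields the claim.

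There is no genuine obstacle here; the lemma is essentially a one-line corollary whose role is conceptual rather than technical. Its point is to highlight that although the per-action estimator $\wt c(i,d_t)$ is biased (by the constant $\beta$), this bias is \emph{action-independent} and therefore vanishes from any relative comparison between two actions. This is the property that makes the EWF analysis go through, since EWF updates depend on the cumulative estimated costs only through their differences.
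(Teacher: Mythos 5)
Your proof is correct and matches the paper's intent exactly: the paper states Lemma \ref{unbias} as a direct corollary of Lemmas \ref{locobs} and \ref{estprop} without writing out the details, and your argument (linearity of $\E_t$, cancellation of the action-independent bias $\beta$, then the local observability identity) is precisely that corollary spelled out.
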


\medskip The significance of Lemma \ref{unbias} lies in the fact that the regret, by definition, is a metric that is based on cost differences. This facilitates the analysis in our main result, which characterizes the performance of the proposed inventory management policy with respect to the regret criterion.

\medskip\begin{theorem} \label{ewfthm}
Consider the repeated newsvendor problem described above. The expected regret of the EWF policy with the cost estimator in Eq.\ \eqref{est} and parameters
\[
\gamma=\frac{1}{2 \beta T}, \qquad \eta = \sqrt{\frac{\log N}{4 \beta^2 T \log \left(2 \beta T N^3 + N + 2 \right)}},
\]
is bounded from above as follows:
\[
\E[\R(T)] \leq 4 \beta \sqrt{T \log N \log \left(2 \beta T N^3 + N + 2 \right)} + 2 \beta \sqrt{T \log N} + 1.
\]
\end{theorem}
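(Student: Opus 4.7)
My plan is to run the standard exponentially weighted forecaster potential argument on the estimated cost sequence $\{\wt c(i,d_t)\}$, translate back to true costs in expectation via Lemma~\ref{unbias}, and bound the quadratic (variance) term by exploiting that $1/\P_t(I_t \geq i)$ is tame when summed against the exploitation distribution $\wt p_i(t) := W_i(t-1)/W(t-1)$.

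Since $\wt c(i,d_t) \geq 0$ (Lemma~\ref{estprop}) and $e^{-u} \leq 1 - u + u^2/2$ for $u \geq 0$, the update \eqref{dyn}, after taking logs and applying $\log(1+y) \leq y$, gives
\[
\log \frac{W(t)}{W(t-1)} \leq -\eta \sum_{i \in \I} \wt p_i(t) \wt c(i,d_t) + \frac{\eta^2}{2} \sum_{i \in \I} \wt p_i(t) \wt c(i,d_t)^2.
\]
Summing over $t$, using $W(0) = N$ and $W(T) \geq e^{-\eta \wt C_{i^*}(T)}$, and rearranging yields
\[
\sum_{t \in \T} \sum_{i \in \I} \wt p_i(t) \wt c(i,d_t) - \wt C_{i^*}(T) \leq \frac{\log N}{\eta} + \frac{\eta}{2} \sum_{t \in \T} \sum_{i \in \I} \wt p_i(t) \wt c(i,d_t)^2.
\]

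Taking expectations, Lemma~\ref{unbias} converts the LHS into $\E\bigl[\sum_t \sum_i \wt p_i(t)(c(i,d_t) - c(i^*,d_t))\bigr]$; writing $p_i(t) = (1-\gamma)\wt p_i(t) + \gamma/N$ then recovers $\E[\R(T)]$ on the LHS, modulo an additive exploration cost of at most $\gamma \beta T$ from the uniform $\gamma/N$ term (each per-round cost difference being bounded in absolute value by $\beta$). The main technical obstacle is controlling the expected quadratic term. By Lemma~\ref{estprop} and the $\F_{t-1}$-measurability of $\wt p_i(t)$,
\[
\E_t\left[\sum_i \wt p_i(t)\, \wt c(i,d_t)^2\right] \leq 4\beta^2 \sum_i \frac{\wt p_i(t)}{\P_t(I_t \geq i)} \leq \frac{4\beta^2}{1-\gamma}\sum_i \frac{p_i(t)}{\P_t(I_t \geq i)}.
\]

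The crucial observation is that this last sum is only $O(\log(N/\gamma))$, rather than the naive $O(N)$. Enumerating $\I = \{i_1 < \cdots < i_N\}$ and setting $Q_k := \P_t(I_t \geq i_k)$, so that $p_{i_k}(t) = Q_k - Q_{k+1}$ with $Q_{N+1}:=0$, one has $\sum_k p_{i_k}(t)/Q_k = \sum_k(1 - Q_{k+1}/Q_k)$; applying $1 - x \leq \log(1/x)$ for $k < N$ telescopes the first $N-1$ terms to $\log(Q_1/Q_N) \leq \log(N/\gamma)$ (since $Q_N \geq \gamma/N$ by uniform exploration), and the $k=N$ term is trivially at most $1$. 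Plugging the variance bound back in and tuning $\eta$ to balance $\log N/\eta$ against the $\eta \cdot O(\beta^2 T \log(N/\gamma))$ variance contribution produces the leading $4\beta\sqrt{T \log N \log(\cdot)}$ term; the $\gamma \beta T$ exploration cost and any slack between $\log(N/\gamma)$ and the paper's $\log(2\beta T N^3 + N + 2)$ factor account for the lower-order $2\beta\sqrt{T \log N} + 1$ correction.
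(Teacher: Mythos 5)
Your proposal is correct and follows essentially the same route as the paper: the standard potential argument on $\log\bigl(W(T)/W(0)\bigr)$ applied to the nonnegative estimated costs, the inequalities $e^{-u}\le 1-u+u^2/2$ and $\log(1+y)\le y$, translation back to true costs via Lemma~\ref{unbias} and the tower rule, and a logarithmic control of $\sum_{i}p_i(t)/\P_t(I_t\ge i)$ before tuning $\eta$ and $\gamma$. The one genuine difference is in that last step: the paper invokes Lemma~16 of \cite{ACGMMSpre} (a general feedback-graph result, instantiated with independence number $1$ and a singleton dominating set) to get the bound $2\log\bigl(N^3/\gamma+N+2\bigr)+2$, whereas you prove the bound from scratch by writing $p_{i_k}(t)/Q_k = 1-Q_{k+1}/Q_k$ and telescoping via $1-x\le\log(1/x)$, using $Q_N\ge\gamma/N$ from the forced exploration. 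Your argument is self-contained, elementary, and in fact slightly sharper ($\log(N/\gamma)+1$ versus the paper's $2\log(N^3/\gamma+N+2)+2$), so it would reproduce the stated bound with room to spare; the only nitpick is your accounting of the exploration overhead, where converting from the exploitation weights $\wt p_i(t)$ back to $p_i(t)=(1-\gamma)\wt p_i(t)+\gamma/N$ costs up to $2\gamma\beta T$ rather than $\gamma\beta T$ (the $(1-\gamma)$ rescaling can also hurt when the per-round difference is negative), which with $\gamma=1/(2\beta T)$ still yields exactly the additive $+1$ in the theorem.
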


\begin{proof}
See Appendix 1.
\end{proof}

\medskip Theorem \ref{ewfthm} implies that the expected regret of the EWF policy scales as $O \left(\sqrt{T \log T}\right)$. On the other hand, the expected regret of 
any policy in the particular setting scales as $\Omega \left(\sqrt{T}\right)$, even if the demand over different time periods is i.i.d.; see 
Section 2.5 in \cite{HR09} and Section 2.3 in \cite{BM13}.

Moreover, the expected regret of the EWF policy scales as $O (\log N)$. While the correct scaling with respect to $N$ is not known, the EWF 
policy cannot be further than a logarithmic factor off the optimal. We note that the best known scaling of the expected regret for locally observable,
partial monitoring problems is $O\left(\sqrt{N}\right)$, achieved by the algorithm introduced in \citet{B13}, a scaling that we
improve considerably upon by taking advantage of the special structure of the problem when designing the cost estimator.

Note that since $\beta = D \cdot \max\{h,b\}$, the expected regret of the EWF policy scales as $O \left(D\sqrt{\log D}\right)$. It can be 
easily verified that the expected regret of any policy scales as $\Omega(D)$ so, again, the performance achieved by the EWF policy is 
near-optimal.

%G
Finally, observe that the suggested choice of the parameters $\eta$ and $\gamma$ involves the total number of
time periods $T$. Thus, in order to implement the suggested policy, the inventory manager needs to know $T$--or at least
an estimate of it--in advance. On the other hand, there are standard ways of getting rid of this assumption in the literature of online learning.
In order to avoid tedious but straightforward technicalities, we assume that the inventory manager knows $T$
in advance and refer the reader to Section~2.3 of \cite{CBL06}.

%=============================================================================================================
%=============================================================================================================

\bigskip\subsection{Benefit from ``Information Stalking'' and Cost of Censoring}

``Information stalking,'' that is, the additional exploration that an optimal policy performs in a dynamic learning setting compared to 
reasonable myopic policies, can be measured in our scenario by the right-most term in Eq.\ \eqref{ewf}, which captures the frequency of 
purely exploratory decisions made by the EWF policy.\footnote{The other term on the right-hand side of Eq.\ \eqref{ewf} captures the beliefs of the 
forecaster about each fixed action being the best one, in hindsight. So, the randomization that it induces is not equivalent to exploration. 
It is, rather, due to the absence of an underlying probabilistic structure.} Note that with the optimal selection of the $\gamma$ parameter, 
this term scales like $1/(TN)$. So, roughly speaking, across all periods and inventory decisions, the proposed policy is expected to explore only a constant number on times, differing little in that sense from a myopic policy. Moreover, the amount of exploration 
can be reduced arbitrarily: by choosing $\gamma = 1/T^k$, for any $k>1$, the number of expected exploratory decisions is decreasing in $k$, 
at the cost of a constant term (so, not affecting the scaling) in the upper bound on the expected regret.

Let us also remark on the cost of censoring, that is, the additional cost incurred by having censored observations instead of pure demand 
samples, as captured by the regret criterion. If there is no censoring in the demand, then one can still use the EWF policy, simply 
replacing the estimator in Eq.\ \ref{est} with $c(i,d_t)$, the actual cost that would have been incurred if the 
manager had held inventory $i\ \in \I$ at period $t\ \in \T$. By following similar arguments to the proof of Theorem\ \ref{ewfthm}, it can be verified that the 
expected regret of the EWF policy in that case scales as $O \left(\sqrt{T}\right)$, $O \left(\sqrt{\log N}\right)$, and $O \left(D\right)$, 
respectively, in terms of the three key primitives of our problem. We note that this setting falls into the class of online learning 
problems with full information, and matching lower bounds are known: for a cost function that is defined as the absolute value of the 
difference between action and outcome, so essentially the newsvendor cost function in Eq.\ \eqref{newscost}, no policy can achieve scaling 
of the expected regret that is better than the scaling achieved by the EWF policy, for all three key primitives of the problem; see Theorem 
3.7 in \cite{CBL06}. Comparing these to Theorem\ \ref{ewfthm}, we have that censoring does not cost more than a root-log factor off the 
optimal scaling, for all three primitives. In that sense, the cost of censoring is negligible with respect to the regret criterion.

%=============================================================================================================
%=============================================================================================================

\bigskip\section{Tracking Regret and the Fixed-Share Forecaster} \label{TrackReg}

In this section, we consider a stronger notion of regret that compares the total cost incurred by a given policy to that of an arbitrary
sequence of inventory decisions. Of course, establishing non-trivial performance guarantees is only possible under some restrictions on either 
the sequence of demands, or the reference sequence of decisions. Following the work of \citet{HW98}, we consider reference sequences that 
switch between decisions at most $S$ times. Clearly, we cannot expect strong guarantees for values of $S$ comparable to $T$, 
so we focus on the case where $S\ll T$, which is also the more relevant in practice. Achieving low regret against such comparators intuitively 
translates to good performance in nonstationary environments, where the demand distribution may change abruptly several times, but otherwise 
remains roughly stationary for long periods of time. 

More formally, let $i_{[T]}=\pa{i_1,i_2,\dots,i_T}\in \I^T$ be a sequence of inventory decisions, and let
\[
C\pa{i_{[T]}} = \sum_{t \in \T} \II{i_{t}\neq i_{t+1}}
\]
be the complexity of that sequence, that is, the number of times that $i_{[T]}$ switches 
between two actions. We denote the class of sequences of complexity at most $S$ by
\[
 \I^T_S = \pa{i_{[T]}=\pa{i_1,i_2,\dots,i_T}: C(i_{[T]})\le S}.
\]
Then, we can define the \textit{tracking regret} against class $\I^T_S$ as
\[
\R_{S}(T) = \sum_{t \in \T} c(I_t,d_t) - \min_{i_{[T]} \in \I^T_S} \sum_{t \in \T} c(i_t,d_t).
\]

We propose a simple variant of the EWF policy that aims to minimize the tracking regret. This variant is based on the 
\textit{Fixed-Share Forecaster} (FSF) introduced in \citet{HW98}; see also \cite{ACFS02}, \citet{BW02}, and \citet{CBGLS12}. The key 
difference compared to the standard EWF is that instead of using Equation~\eqref{dyn} for updating the weights, the 
FSF policy uses the update rule:
\begin{equation}\label{eq:fsfupdate}
 W_i(t) = W_i(t-1) e^{-\eta \wt c(i,d_t)} + \frac{\alpha}{N} \sum_{j\in\I} W_j(t-1),
\end{equation}
for all $i$ in $\I$, where $\alpha>0$ is an suitably chosen constant. Otherwise, the probabilities $p_i(t)$ and the cost estimates 
$\tc(i,d_t)$ are computed as described in the previous section. The following result summarizes the performance guarantee that
we can prove for the FSF policy.

\medskip\begin{theorem} \label{fsfthm}
Consider the repeated newsvendor problem. The expected tracking regret of the FSF policy with the cost estimator in Eq.~\eqref{est} 
and parameters
\[
\alpha = \frac{1}{T}, \qquad \gamma=\frac{1}{2 \beta T}, \qquad \eta = \sqrt{\frac{\log \pa{NT}}{4 \beta^2 T \log \left(2 \beta T N^3 + N + 
2 \right)}},
\]
is bounded from above, for any $S$, as follows:
\[
\E[\R_S(T)] \leq 2 (S+1) \beta \sqrt{T \log \pa{NT} \log \left(2 \beta T N^3 + N + 2 \right)} + 2 \beta \sqrt{T \log N} + 2.
\]
Furthermore, for fixed $S$, choosing parameters
\[
\alpha = \frac{1}{T}, \qquad \gamma=\frac{1}{2 \beta T}, \qquad \eta = \sqrt{\frac{S	\log \pa{NT}}{4 \beta^2 T \log \left(2 \beta T N^3 + 
N + 2 \right)}},
\]
leads to an improved bound:
\[
\E[\R_S(T)] \leq 4 \beta \sqrt{S T \log {NT} \log \left(2 \beta T N^3 + N + 2 \right)} + 2 \beta \sqrt{T \log N} + 2.
\]
\end{theorem}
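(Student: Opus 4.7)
The plan is to adapt the potential-function argument used for Theorem~\ref{ewfthm} to the Fixed-Share update~\eqref{eq:fsfupdate}, following the classical analysis of~\citet{HW98}. The central quantity to track is $\Phi_t = \log W(t)$ with $W(0)=N$; I would upper-bound $\Phi_T - \Phi_0$ via the algorithm's dynamics and lower-bound it via an arbitrary reference sequence $i_{[T]}\in\I^T_S$. The gap between the two bounds, after taking conditional expectations and invoking Lemma~\ref{unbias} so that estimated-cost differences become true-cost differences, will control the tracking regret against $i_{[T]}$ and hence the worst-case tracking regret against $\I^T_S$.

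For the upper bound I would decompose each round into the exponential step followed by the additive mixing. Writing $W_i(t-1)/W(t-1) = (p_i(t)-\gamma/N)/(1-\gamma)$ from~\eqref{ewf} and applying $e^{-x}\leq 1-x+x^2/2$ to the non-negative estimator $\wt c$, the exponential step contributes at most $-\eta\sum_i\tfrac{p_i(t)-\gamma/N}{1-\gamma}\wt c(i,d_t)+\tfrac{\eta^2}{2}\sum_i\tfrac{p_i(t)-\gamma/N}{1-\gamma}\wt c(i,d_t)^2$ to $\Phi_t-\Phi_{t-1}$, while the additive mixing inflates $W(t)/W(t-1)$ by an extra $\alpha$, contributing at most $\log(1+\alpha)\leq\alpha$. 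For the lower bound I would exploit two consequences of~\eqref{eq:fsfupdate}: $W_i(t)\geq W_i(t-1)e^{-\eta \wt c(i,d_t)}$, which yields exponential decay along any constant segment of $i_{[T]}$, and $W_i(t)\geq (\alpha/N)W(t-1)$, which lower-bounds the weight of a freshly switched-to expert in terms of the previous total weight. Chaining these along the at most $S+1$ constant segments of $i_{[T]}$ and using $W(t-1)\geq W_{i_{t-1}}(t-1)$ at each switch gives $\log W_{i_T}(T)\geq -\eta\sum_t \wt c(i_t,d_t) + S\log(\alpha/N)$.

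Putting the two bounds together and taking conditional expectations yields, schematically,
\[
 \E[\R_S(T)] \leq \frac{\log N + S\log(N/\alpha) + T\alpha}{\eta} \;+\; \frac{\eta}{2}\,\E\!\left[\sum_t\sum_i p_i(t)\,\wt c(i,d_t)^2\right] \;+\; 2\beta\gamma T + 1.
\]
The variance term is bounded exactly as in the proof of Theorem~\ref{ewfthm}: Lemma~\ref{estprop} gives $\E_t[\wt c(i,d_t)^2]\leq 4\beta^2/\P_t(I_t\geq i)$, and the exploration floor $p_i(t)\geq\gamma/N$ allows $\sum_i p_i(t)/\P_t(I_t\geq i)$ to be controlled by $\log(2\beta TN^3+N+2)$. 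Setting $\alpha=1/T$ turns $T\alpha$ into a constant and $\log(N/\alpha)$ into $\log(NT)$; $\gamma=1/(2\beta T)$ kills the exploration cost. Tuning $\eta$ for the aggregate $(S+1)\log(NT)$ in the numerator (uniformly in $S$) produces the first displayed bound, while tuning $\eta$ for a specific $S$ with $\sqrt{S}$ absorbed produces the sharper second bound.

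The main obstacle is the lower bound together with the nonstandard shape of~\eqref{eq:fsfupdate}: unlike the convexly-mixed Herbster-Warmuth update, this one is additive and leaves $W(t)$ un-normalized, so the total weight can grow by a factor of up to $1+\alpha$ per round. Carefully bookkeeping this growth is what forces $\alpha=\Theta(1/T)$ so that $T\alpha$ remains $O(1)$, and is in turn responsible for the $\log(NT)$ appearing inside the square roots of the final bound.
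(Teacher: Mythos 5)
Your proposal is correct and follows essentially the same route as the paper's proof: the same two consequences of the Fixed-Share update (exponential decay within constant segments of the comparator, and the $W_i(t)\ge(\alpha/N)W(t-1)$ bound at switches) for the lower bound, the same per-round $+\alpha$ inflation in the upper bound on $\log\bigl(W(t)/W(t-1)\bigr)$, the same variance control via Lemma~\ref{estprop} and the Lemma~16 argument, and the same tuning of $\alpha$, $\gamma$, $\eta$. The only cosmetic difference is that the paper organizes the argument interval by interval and then sums, whereas you chain the lower bound globally; the resulting inequality is identical.
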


\begin{proof}
See Appendix 1.
\end{proof}

%=============================================================================================================
%=============================================================================================================

\bigskip\section{Numerical Experiments} \label{Sim}

In this section, we perform a numerical study of the EWF and FSF policies, comparing them to the most relevant existing methods in 
the literature. The goal of our experiments is to illustrate the robustness of the proposed approach to nonstationary demand sequences, 
and thus to show the benefits of designing policies that do away with time-stationarity assumptions. We particularly focus on making 
comparisons to an inventory management policy based on Alternating Exploration and Exploitation phases (AEE), introduced in \citet{BM13}, 
which is designed for the same learning problem, albeit tailored to time-stationary demand settings. Our numerical investigation suggests that 
this assumption is crucial for the AEE policy to perform well: although it has excellent performance in time-stationary settings, by construction, 
it is unable to adapt to nonstationary demands, and consequently performs very poorly in the latter settings. The online gradient descent-based Adaptive
Inventory Management policy (AIM), introduced in \citet{HR09}, is also closely related to our work. This policy is shown to perform quite well
in our numerical experiments. The caveat is that, unlike other policies, it requires a lost-sales indicator to work properly in the case of discrete 
demand. As we detail in Appendix 2, without such an indicator this policy has linear regret, and may perform poorly even in settings where
the demand is constant.

In our numerical experiments, we tune the parameters as follows. Regarding the policies proposed in this paper, we set 
$\gamma = 1 / (2\beta T)$ and $\alpha = 1/T$, as suggested by Theorems~\ref{ewfthm} and~\ref{fsfthm}, and 
$\eta = \sqrt{\frac{S\log N}{4\beta^2 T}}$, with $S = 1$ for the EWF policy and some suitable $S>1$ for the FSF policy. Regarding 
the AEE policy, we use the parametrization provided in Section~5 of \citet{BM13}, without resorting to the data aggregation technique 
described in their Section~5.1. We note that, in our setting, this choice actually works in favor of the AEE policy, as aggregating 
estimates from early periods is clearly harmful when the demand sequence is nonstationary. 

In each experiment, we choose $T=100,000$ and $\I = \D = \ev{1,2,\dots,30}$. Each reported curve is an average of 100 runs, with shaded 
areas representing the standard deviations. 

Finally, to gain some insight into the cost of censoring in practice, we study both the uncensored and the censored versions of each of the 
policies described above. Note that in the full information case, the policy proposed in \citet{BM13} has no reason to explore. Rather, it
exploits in a greedy fashion by ordering the empirical critical quantile. We term the resulting policy ``greedy-full.''

We start by reproducing the first numerical experiment in Section 5 of \citet{BM13}, where the sequence of demands is i.i.d., with each 
$d_t$ generated independently from a binomial distribution representing $30$ independent trials with a success probability of $1/2$. The 
regret of each policy is plotted on Figure~\ref{fig:stat}. In this setting, the AEE policy outperforms the others by a wide margin. In particular,
in both the censored and the uncensored cases, the regret of the EWF policy is about 10 times greater than that of the AEE policy. 
This observation is not surprising: EWF is decidedly more conservative, since it aims to perform reasonably even when the demand 
is nonstationary.\footnote{The inferior performance of general no-regret algorithms, like the EWF, in time-stationary environments 
has been widely acknowledged in the online learning literature, with some remedies offered by the works of \cite{EWK14,SNL14}.}
Nevertheless, the empirical regret of the EWF policy grows in a square-root fashion, in line with our theoretical guarantees. 

\begin{figure}
\begin{center}
\includegraphics[width=.6\textwidth]{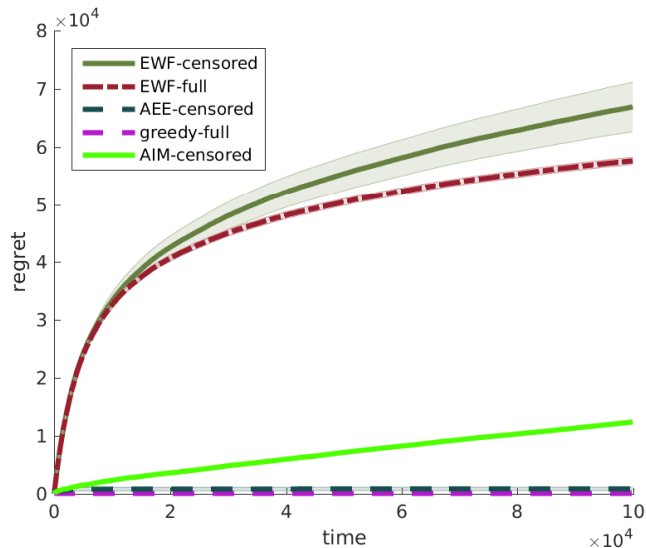}
\caption{The regret of the different policies on a stationary demand series.}\label{fig:stat}
\end{center}
\end{figure}

Our second batch of numerical experiments considers a simple nonstationary demand sequence. As before, the demands are generated 
independently from a binomial distribution with 30 trials, albeit with a time-dependent success probability $q_t$. Similarly to the previous 
experiment, we set $q_t = 1/2$ for most values of $t$, however, this probability drops to $0.1$ for $t\in[T/5,T/2]$. The performance of each 
policy is shown on Figure~\ref{fig:nonstat}; for clarity, we show the cumulative cost incurred by each policy instead of their 
respective regrets.

The first lesson from this experiment is that the AEE policy fails to cope with the shifting demand distribution, incurring a linearly increasing 
regret. The intuition behind this is simple: since the policy, effectively, collects data in deterministically (and scarcely) scheduled 
exploration periods, it is easily thrown off its tracks by a shifting demand distribution. In the particular experiment, the AEE policy bases all its decisions 
during the interval $[T/2,T]$ on data that has nothing to do with the actual demand distribution. The policy has no mechanism to 
recover from such mistakes, and introducing such a mechanism while maintaining strong performance guarantees, is far from trivial. 

In contrast, the EWF policy is robust to this nonstationary behavior. Notably, its cumulative cost in the censored case, using our carefully designed 
cost estimator, is remarkably close to that in the full-feedback case. This confirms our main insight, that censoring has minimal impact on 
the performance of well-designed learning policies in this setting. We also highlight the excellent performance of the FSF policy: ran with $S = 3$, 
that is, correctly anticipating $3$ shifts in the comparator sequence, this policy is seen to react much quicker to the distributional 
shifts compared to the standard EWF variant, thus achieving superior performance. 

\begin{figure}
\begin{center}
\includegraphics[width=.6\textwidth]{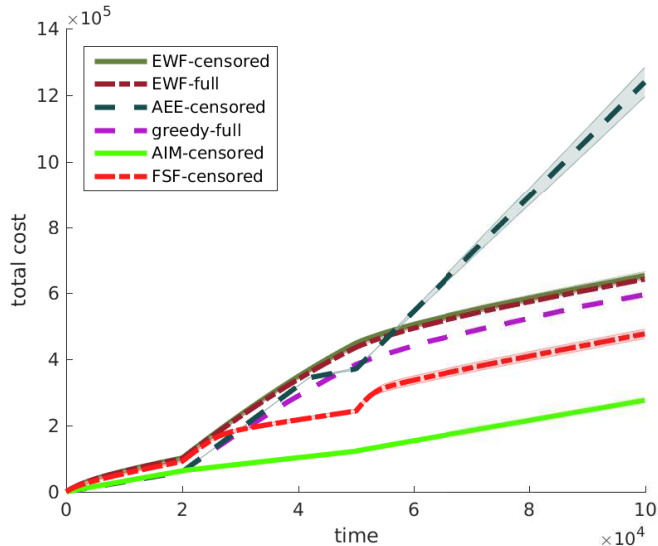}
\caption{The regret of the different policies on a nonstationary demand series.}\label{fig:nonstat}
\end{center}
\end{figure}

As a concluding remark, let us note that the performance of the AIM policy is quite good in both experiments; in fact, superior to that of the EWF/FSF 
policies. This suggests that on some occasions, it may constitute a viable option. The main disadvantage is that in the absence of performance 
guarantees (i.e., when a lost-sales indicator is not available), it is hard to know in advance how well will this policy perform. We refer the 
reader to Appendix 2 for further discussion. It is reasonable, of course, to conjecture that as the demand becomes more fine-grained, 
the value of the lost-sales indicator decreases and, thus, the AIM policy performs better; see also Section 5.2 in \cite{BM13}. However, apart
from few numerical experiments, little else is known in terms of quantifying the rate at which the discrete setting ``converges'' to a continuous one,
and the corresponding effect on the performance of online gradient descent-based policies. This simply reiterates the point about the lack of performance
guarantees.

%=============================================================================================================
%=============================================================================================================

\bigskip\section{The Single-Warehouse Multi-Retailer Problem} \label{combnews}

In this section we extend our benchmark model to include the inventory management problem of a perishable product, for a vertically 
integrated supply chain with a single warehouse and $K$ retailers. Again, the demand in each of the $K$ retailers needs to be learned from 
censored/sales data, and a nonstochastic view of the problem is adopted. As this setting has many commonalities with the model of Section 2, 
for brevity, we only present the points at which they differ.

Let us denote by $\K$ the set of retailers $\{1, 2,\ldots,K\}$. At the beginning of period $t \in \T$, the inventory manager allocates 
$I_t^{(k)} \in \I$ units of inventory from the warehouse to retailer $k \in \K$. If $I_t^{(k)}$ is not zero, then the retailer incurs a 
fixed ordering cost of $f^{(k)}$. We assume zero lead times, so the inventory is delivered to the retailer instantaneously. The retailer 
experiences demand $d_t^{(k)} \in \D$ during the particular period. At the end of the period, and depending on the initial inventory and the 
demand, the retailer incurs overage or underage cost at rates $h^{(k)}$ and $b^{(k)}$, respectively, and any remaining inventory perishes. 
Thus, at the end of time period $t \in \T$, retailer $k \in \K$ incurs a total cost of
\[
c_k\left(I_t^{(k)},d_t^{(k)}\right) = f^{(k)} \II{I_t^{(k)}>0} + h^{(k)} \left(I_t^{(k)} - d_t^{(k)}\right)^+ + b^{(k)} 
\left(d_t^{(k)} - I_t^{(k)}\right)^+.
\]

We assume that the supply chain operates in a ``push'' manner, from upstream to downstream. More specifically, the upstream supplier 
replenishes the inventory of the warehouse with $r>0$ units at the beginning of every time period. The inventory manager allocates different 
parts of this inventory to the different retailers, but may also have an incentive to keep a part of it at the warehouse, if she believes 
that the total demand at the retailer level is low. (The overage cost rate usually increases as one moves downstream.) Hence, the cost that 
the warehouse incurs over time period $t$ is equal to
\[
c_0\left(I_t^{(1)},\ldots,I_t^{(K)}, r\right) = f^{(0)} + h^{(0)} \left(r - \sum_{k \in \K} I_t^{(k)}\right)^+.
\]

The allocation that the inventory manager can make must belong to the set
\[
\A_r = \left\{ \left(i^{(1)},\ldots,i^{(K)}\right) \in \I^K: \sum_{k \in \K} i^{(k)} \leq r \right\}.
\]

The goal of the inventory manager is to minimize the total cost incurred by the warehouse and the retailers throughout the $T$ periods. 
More concretely, the demand sequences $\left\{d_t^{(k)}\right\},\ k \in \K,$ and the inventory replenishment level $r$ are exogenously 
determined by the market and the upstream supplier, respectively, and the manager wishes to minimize her expected regret, $\E[\R_c(T)]$, 
over the best fixed ($K$-dimensional) inventory decision, in hindsight.

We refer to this setting as the ``combinatorial'' version of the repeated newsvendor problem, due to the fact that the manager's inventory 
decisions have a combinatorial nature, taking values in $\A_r$. To the best of our knowledge, this setting has not been studied before from 
the angle of demand learning via censored data, and from a nonstochastic viewpoint. It may be worthwhile to mention, however, some 
high-level similarity to \cite{SCD16}, where the results in \cite{HR09} are extended to the capacitated multi-product case. The model 
analyzed in \cite{SCD16} is also a repeated newsvendor model with a combinatorial action space, but the approach is quite different: in 
order to obtain results for the (arguably harder) case of nonperishable products, strong probabilistic assumptions are made about the demand 
for the different products.

For convenience, let us define the quantities
\[
\beta = D \cdot \max\left\{ \max_{k=0,\ldots,K} \left\{h^{(k)}\right\}, \max_{k \in \K} \left\{b^{(k)}\right\} \right\},
\]
and
\[
f = \max_{k=0,\ldots,K} \left\{f^{(k)}\right\}.
\]

In what follows, we mimic the approach of Section 2, to construct an inventory management policy that performs well with respect to the 
regret criterion. At a high level, the proposed policy follows the EWF scheme, but unlike the versions discussed in previous 
sections, it draws actions in a non-uniform way during the exploration rounds.

Specifically, in each round the policy explores with probability $\gamma$. The exploration procedure 
generates a random allocation by selecting a retailer index $\kappa$ uniformly at random from $\K$, and by choosing the order level 
$I^{(\kappa)}$ uniformly at random from $\I$. The rest of the allocations can be completed arbitrarily; for simplicity, we choose an order 
level of $0$ for the remaining retailer. The probability of choosing the allocation $\left(i^{(1)},\ldots,i^{(K)}\right)$ is denoted 
by $\mu_{\left(i^{(1)},\ldots,i^{(K)}\right)}$.

Now let us describe the main component of the policy, which is computing the weights of the assignments.
Let $\wt c\left(i^{(1)},\ldots,i^{(K)}, r, d_t^{(1)},\ldots,d_t^{(K)}\right)$ be the estimated cost that inventory decision 
$\left(i^{(1)},\ldots,i^{(K)}\right)$ would have incurred at period $t \in \T$. We compute, recursively, the quantities
\[
W_{\left(i^{(1)},\ldots,i^{(K)}\right)}(t) = W_{\left(i^{(1)},\ldots,i^{(K)}\right)}(t-1) e^{-\eta \wt c\left(i^{(1)},\ldots,i^{(K)}, r, d_t^{(1)},\ldots,d_t^{(K)}\right)},
\]
where $\left(i^{(1)},\ldots,i^{(K)}\right) \in \A_r$, with $W_{\left(i^{(1)},\ldots,i^{(K)}\right)}(0)=1$. We also use the shorthand notation
\[
W(t) = \sum_{\left(i^{(1)},\ldots,i^{(K)}\right) \in \A_r} W_{\left(i^{(1)},\ldots,i^{(K)}\right)}(t).
\]

\medskip In this setting, the EWF policy chooses inventory $\left(I_t^{(1)},\ldots,I_t^{(K)}\right) = \left(i^{(1)},\ldots,i^{(K)}\right) 
\in \A_r$ with probability
\[
p_{\left(i^{(1)},\ldots,i^{(K)}\right)}(t) = (1-\gamma) \frac{W_{\left(i^{(1)},\ldots,i^{(K)}\right)}(t-1)}{W(t-1)} +\gamma \mu_{\left(i^{(1)},\ldots,i^{(K)}\right)}.
\]
Similarly to Section 2, the proper values for parameters $\eta$ and $\gamma$ are specified during the analysis stage.

The special structure of the problem is exploited by designing a suitable cost estimator. For every $\left(i^{(1)},\ldots,i^{(K)}\right) 
\in \A_r$, let
\[
\wt c_k\left(i^{(k)},d_t^{(k)}\right) = \frac{\II{I_t^{(k)} \geq i^{(k)}}}{\P_t \left(I_t^{(k)} \geq i^{(k)}\right)} 
\left(v_i^{T} \S_i e_{d_t^{(k)}} +f^{(k)} \II{i^{(k)}>0} + \beta \right),
\]
where $v_i$ and $\S_i$ are the same as in Lemma \ref{locobs}, and
\[
\P_t \left(I_t^{(k)} \geq i^{(k)}\right) = \sum_{\left(j^{(1)},\ldots,j^{(K)}\right) \in \A_r : j^{(k)} \geq i^{(k)}} 
p_{\left(j^{(1)},\ldots,j^{(K)}\right)}(t).
\]
Through this, we define the estimated cost of inventory decision $\left(i^{(1)},\ldots,i^{(K)}\right) \in \A_r$ as
\begin{equation}
\wt c\left(i^{(1)},\ldots,i^{(K)}, r, d_t^{(1)},\ldots,d_t^{(K)}\right) =  c_0\left(i^{(1)},\ldots,i^{(K)}, r\right) + \sum_{k \in \K} 
\wt c_k\left(i^{(k)},d_t^{(k)}\right). \label{estcomb}
\end{equation}

Note that the proposed estimator is designed in order to be unbiased in terms of inferring the difference in expected cost between two 
decisions:
\begin{align} \label{locobscomb}
&\E_t\left[\wt c\left(i^{(1)},\ldots,i^{(K)}, r, d_t^{(1)},\ldots,d_t^{(K)}\right) - \wt c\left(j^{(1)},\ldots,j^{(K)}, r, 
d_t^{(1)},\ldots,d_t^{(K)}\right) \right] \\ \nonumber
&=  c\left(i^{(1)},\ldots,i^{(K)}, r, d_t^{(1)},\ldots,d_t^{(K)}\right) - c\left(j^{(1)},\ldots,j^{(K)}, r, 
d_t^{(1)},\ldots,d_t^{(K)}\right), 
\end{align}
for every $\left(i^{(1)},\ldots,i^{(K)}\right), \left(j^{(1)},\ldots,j^{(K)}\right) \in \A_r$; a consequence of the local observability 
property of the feedback structure in the single-retailer setting. Furthermore, it is easy to show that the mean of the estimator satisfies
\begin{equation}
0 \leq \E_t\left[\wt c\left(i^{(1)},\ldots,i^{(K)}, r, d_t^{(1)},\ldots,d_t^{(K)}\right) \right] \leq f + K(2 \beta + f).
\label{meancomb}
\end{equation}

The following lemma provides a bound on the second moment:
\begin{lemma} \label{lem:comb_var}
 The second moment of the estimator defined in Equation~\eqref{estcomb} satisfies
\begin{align*}
 \sum_{\left(i^{(1)},\ldots,i^{(K)}\right) \in \A_r} p_{\left(i^{(1)},\ldots,i^{(K)}\right)}(t) 
&\E_t\left[\wt c\left(i^{(1)},\ldots,i^{(K)}, r, d_t^{(1)},\ldots,d_t^{(K)}\right)^2 \right] 
\\
&\le 16 K^2 \beta^2 
\log\pa{\frac{KN^3}{\gamma} + N + 2} + 16 K^2\beta^2 + 2(f+K\beta)^2.
\end{align*}
\end{lemma}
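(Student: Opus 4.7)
The plan is to reduce the second-moment bound to a sum of per-retailer one-dimensional variance sums, each of which can be controlled via a telescoping argument that exploits the exploration-induced lower bound on $\P_t(I_t^{(k)} \ge j)$. First, I would decompose the squared estimator using $(a+b)^2 \le 2a^2 + 2b^2$ and the power-mean inequality $\bpa{\sum_{k\in\K} x_k}^2 \le K \sum_{k\in\K} x_k^2$, yielding
\[
\wt c\bpa{i^{(1)},\ldots,i^{(K)}, r, d_t^{(1)},\ldots,d_t^{(K)}}^2 \le 2\, c_0\bpa{i^{(1)},\ldots,i^{(K)}, r}^2 + 2K \sum_{k\in\K} \wt c_k\bpa{i^{(k)}, d_t^{(k)}}^2.
\]
The warehouse cost $c_0 = f^{(0)} + h^{(0)}\bpa{r - \sum_k i^{(k)}}^+$ is deterministic and, using the natural bound $r \le KD$, satisfies $c_0 \le f + K\beta$. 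After weighting by $p_{\vec i}(t)$ and summing, this part contributes exactly the $2(f+K\beta)^2$ summand of the stated bound.

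Second, for each fixed retailer $k$, the per-retailer estimator $\wt c_k(i^{(k)}, d_t^{(k)})$ depends on $\vec i = (i^{(1)},\ldots,i^{(K)})$ only through its $k$-th coordinate, so marginalizing over the remaining coordinates,
\[
\sum_{\vec i \in \A_r} p_{\vec i}(t)\, \E_t\!\bpa{\wt c_k\bpa{i^{(k)}, d_t^{(k)}}^2} = \sum_{j \in \I} P_k(j)\, \E_t\!\bpa{\wt c_k\bpa{j, d_t^{(k)}}^2},
\]
where $P_k(j) = \P_t(I_t^{(k)} = j) = \sum_{\vec i \in \A_r : i^{(k)} = j} p_{\vec i}(t)$. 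The importance-weighted form of $\wt c_k$, together with the single-retailer bound $|v_j^{T} \S_j e_{d_t^{(k)}} + \beta| \le 2\beta$ from Lemma~\ref{estprop} and the fixed cost contribution $f^{(k)} \II{j > 0} \le f$, gives $\E_t[\wt c_k(j, d_t^{(k)})^2] \le (2\beta + f)^2 / \P_t(I_t^{(k)} \ge j)$. Thus the problem reduces to bounding $\sum_{j \in \I} P_k(j) / \P_t(I_t^{(k)} \ge j)$ for each $k$.

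Third, I would handle this latter sum by a telescoping argument: ordering $\I$ as $j_1 < \cdots < j_N$ and setting $Q_m = \P_t(I_t^{(k)} \ge j_m)$, $q_m = P_k(j_m) = Q_m - Q_{m+1}$ with $Q_{N+1}=0$, the elementary inequality $1 - x \le \log(1/x)$ applied at $x = Q_{m+1}/Q_m$ gives $q_m/Q_m \le \log(Q_m/Q_{m+1})$ for each $m < N$, which telescopes to $\log(1/Q_N)$, while the boundary contribution $q_N/Q_N = 1$ is folded into the logarithm with mild slack. The essential input is the exploration lower bound: since $\mu$ places probability $1/(KN)$ on each (retailer, level) pair, we have $\P_t(I_t^{(k)} \ge j) \ge \gamma/(KN)$ for every $j \in \I$ (with a stronger bound for smaller $j$), producing a factor of order $\log(KN/\gamma)$ that matches the leading behavior of the stated $\log(KN^3/\gamma + N + 2)$. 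Summing over $k$ contributes a factor of $K$, and combining with the $2K$ from the initial decomposition produces the $16 K^2 \beta^2$-scaled summands, after suitably loosening $(2\beta+f)^2$ or separately accounting for the fixed-cost contribution within $c_0$.

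The main obstacle is matching the precise form of the logarithmic factor $\log(KN^3/\gamma + N + 2)$: the clean telescoping argument delivers $\log(KN/\gamma) + 1$, so reproducing the stated expression requires some additional bookkeeping around the boundary of the telescoping sum and around the non-uniformity of the marginal exploration probabilities induced by $\mu$. The remaining calculations are routine substitution.
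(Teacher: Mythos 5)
Your proof is correct and reaches the stated bound (in fact a slightly sharper one), but it differs from the paper's argument in two substantive ways. First, to control $\E_t\bigl[\bigl(\sum_k \wt c_k\bigr)^2\bigr]$ the paper expands the square as a double sum over retailer pairs $(j,k)$, symmetrizes via $ab\le\frac{a^2+b^2}{2}$ applied to the two inverse probabilities, and then bounds $\sum_k \II{I_t^{(k)}\ge \tI_t^{(k)}}\ell_k \le 2K\beta$ pointwise; your direct use of $\bigl(\sum_k x_k\bigr)^2\le K\sum_k x_k^2$ followed by marginalization over the coordinates other than $k$ is more economical and lands on exactly the same quantity $4K\beta^2\sum_k\sum_{j} \P_t(I_t^{(k)}=j)/\P_t(I_t^{(k)}\ge j)$, so the two decompositions are interchangeable here. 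Second, where the paper black-boxes the sum $\sum_{j} \P_t(I_t^{(k)}=j)/\P_t(I_t^{(k)}\ge j)$ by invoking Lemma~16 of \cite{ACGMMSpre} (with the feedback graph being a total order, independence number $1$, dominating set the top inventory level), your telescoping argument via $1-x\le\log(1/x)$ gives the self-contained bound $\log(KN/\gamma)+1$, which is \emph{smaller} than the cited $2\log(KN^3/\gamma+N+2)+2$; so the ``obstacle'' you flag about matching the precise logarithmic factor is not an obstacle at all --- a tighter bound trivially implies the stated one, and no extra bookkeeping is needed. The only genuine loose end is the one you already identified: the term $f^{(k)}\II{i^{(k)}>0}$ inside $\ell_k$ means the honest per-retailer constant is $(2\beta+f)^2$ rather than $4\beta^2$; note, however, that the paper's own proof makes the identical simplification (it bounds $\ell_j$ by $2\beta$ despite the fixed-cost term), so your derivation is no weaker than the published one on this point.
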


\begin{proof}
See Appendix 1.
\end{proof}

The following theorem establishes a performance guarantee of the proposed allocation policy.

\medskip\begin{theorem} \label{ewfthmcomb}
Consider the ``combinatorial'' version of the repeated newsvendor problem described above. The expected regret of the EWF policy with the cost estimator in Eq.\ \eqref{estcomb} and parameters
\[
\gamma=\frac{1}{T}, \qquad \eta = \sqrt{\frac{\log N}{\beta^2 KT \log \left(TNK + N + 2 \right)}},
\]
is bounded from above as 
\[
 \E[\R(T)] = O\pa{K^{3/2}\beta\sqrt{T \log N \log\pa{TNK}}}.
\]
\end{theorem}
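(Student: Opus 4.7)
The plan is to follow the blueprint of the proof of Theorem~\ref{ewfthm}, now carried out over the combinatorial action space $\A_r$. The key observation is that although $\abs{\A_r}$ can be as large as $O(N^K)$, the standard potential-function analysis of EWF contributes only a $\log\abs{\A_r}\le K\log N$ term to the regret, which is polynomial rather than exponential in $K$.

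The first step is to apply the standard EWF potential-function argument to the sequence of estimated losses $\wt c(\cdot,t)$. Writing $\wt p_a(t)=W_a(t-1)/W(t-1)$ for the exploitation distribution over $a\in\A_r$, the non-negativity of $\wt c$ (which follows from $\beta\ge\abs{v_i^T \S_i e_d}$ and $f^{(k)}\ge 0$) together with the weight update yields, using the elementary inequality $e^{-y}\le 1-y+y^2/2$ for $y\ge 0$ and telescoping, the bound
\[
\sum_{t\in\T}\sum_{a\in\A_r}\wt p_a(t)\,\wt c(a,t) \;-\; \min_{j\in\A_r}\sum_{t\in\T}\wt c(j,t) \;\le\; \frac{K\log N}{\eta} \;+\; \frac{\eta}{2}\sum_{t\in\T}\sum_{a\in\A_r}\wt p_a(t)\,\wt c(a,t)^2.
\]

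Next, I would convert from $\wt p_a(t)$ to the actual sampling distribution $p_a(t)=(1-\gamma)\wt p_a(t)+\gamma\mu_a$; this introduces only a multiplicative $1/(1-\gamma)$ factor on the right-hand side together with an additive $O\bpa{\gamma T\cdot K(f+\beta)}=O\bpa{K(f+\beta)}$ correction coming from the purely exploratory rounds. Taking expectations and invoking the unbiased-difference identity in Eq.~\eqref{locobscomb}, the left-hand side becomes exactly $\E[\R(T)]$ (when $j$ is taken to be the in-hindsight optimal allocation). The second-moment term on the right is controlled by Lemma~\ref{lem:comb_var}, which after substituting $\gamma=1/T$ contributes at most $\eta\,T\cdot O\bpa{K^2\beta^2\log(TNK)}$. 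Combining these pieces,
\[
\E[\R(T)] \;\lesssim\; \frac{K\log N}{\eta} \;+\; \eta\,T\,K^2\beta^2\log(TNK) \;+\; O\bpa{K(f+\beta)},
\]
and the stated choice of $\eta$ balances the two dominant terms at $K^{3/2}\beta\sqrt{T\log N\log(TNK)}$, as claimed.

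The main obstacle is guaranteeing that the variance term does \emph{not} inherit any $\mathrm{poly}(\abs{\A_r})$ blow-up from the exponentially large action space. This is precisely the role of the tailored exploration distribution $\mu$ (which samples a single retailer and a single order level uniformly) together with the factorized form of the cost estimator in Eq.~\eqref{estcomb}: each per-coordinate importance-weight denominator $\P_t(I_t^{(k)}\ge i^{(k)})$ is bounded below by a quantity of order $\gamma/(KN)$, and Lemma~\ref{lem:comb_var} converts this coordinate-wise coverage, together with a harmonic-sum estimate $\sum_{i\in\I}1/\P_t(I_t^{(k)}\ge i)=O(KN\log(KN/\gamma))$, into a variance that depends only polynomially on $K$, $\log N$, and $\log T$ rather than on $\abs{\A_r}$.
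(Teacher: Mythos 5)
Your proposal is correct and follows essentially the same route as the paper's own proof: the standard EWF potential argument carried out over $\A_r$ with $\log\abs{\A_r}\le K\log N$, unbiasedness of cost differences via Eq.~\eqref{locobscomb}, the exploration term controlled via Eq.~\eqref{meancomb}, the second moment controlled by Lemma~\ref{lem:comb_var}, and the prescribed $\eta$ balancing the two dominant terms at $K^{3/2}\beta\sqrt{T\log N\log(TNK)}$. One minor aside: the quantity that Lemma~\ref{lem:comb_var} bounds logarithmically is the weighted sum $\sum_{i}\P_t(I_t^{(j)}=i)/\P_t(I_t^{(j)}\ge i)$, not $\sum_{i}1/\P_t(I_t^{(j)}\ge i)$ as written in your last paragraph, but since you invoke the lemma as a black box this does not affect the argument.
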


\begin{proof}
See Appendix 1.
\end{proof}

As in the single-warehouse setting, the scaling of the expected regret of the EWF policy is near-optimal with respect to both $T$ and $N$, 
similarly to the single-retailer setting in Section 2. Again, we highlight the fact that the logarithmic scaling with respect to $N$ is made 
possible by our carefully designed estimator that assigns non-zero entries to all inventory levels below the realized inventory decision.
On the other hand, the $O\left(K^{3/2}\right)$ scaling with respect to the number of retailers is 
unlikely to be optimal. 
To see this, we note that the combinatorial version of the repeated newsvendor problem falls into the broader 
class of online combinatorial optimization, for which the limitations of the EWF policy are now clearly understood. In particular, 
\citet{ABL14} shows that the EWF policy has, in general, suboptimal performance guarantees in terms of the size of the 
decision set, essentially translating to a suboptimal scaling with respect to $K$ in our case. We conjecture that the optimal scaling with
respect to $K$ is linear, and it can be achieved by a suitable adaptation of the Component Hedge algorithm in \citet{KWK10} (also called 
Online Stochastic Mirror Descent in \citet{ABL14}). We omit a detailed treatment of that direction in order to maintain the clarity of our 
presentation. We also remark that a similar combination of feedback graphs and combinatorial decision sets is studied in \citet{KNVM14}. 
An adaptation of their algorithm, termed FPL-IX, combined with our cost estimates, can be shown to satisfy a regret bound identical to our Theorem~\ref{ewfthmcomb}. 
Details are again omitted for brevity.

Note that from a mathematical standpoint, the role of the cost that the warehouse incurs at every time period is to couple the inventory 
management problems of the different retailers. However, the exact functional form of $c_0\left(I_t^{(1)},\ldots,I_t^{(K)}\right)$ is never 
used. Thus, the approach presented above extends in a straightforward way to the case where the supply chain operates in a ``pull'' manner, 
from downstream to upstream. In that case, the inventory manager has no incentive to keep any inventory at the warehouse, since the product 
is perishable. The only cost that the warehouse incurs is a fixed ordering cost of $f^{(0)}$ for procuring the required inventory from 
upstream:
\[
c_0\left(I_t^{(1)},\ldots,I_t^{(K)}\right) = f^{(0)} \II{\sum_{k \in \K} I_t^{(k)}>0}.
\]
With this minor modification, the rest of the analysis follows verbatim.

%=============================================================================================================
%=============================================================================================================

\bigskip\section{Concluding Remarks} \label{conc}

We conclude the paper by drawing connections between our setting and two related problems
which have attracted the attention of academic research recently,
and discussing some directions for future research.

\subsection{Two Related Problems}

The first problem is bidding in repeated second-price auctions with valuation learning: a bidder participates in second-price auctions for 
different products, with her valuations of these products being unknown a priori. The bidder learns her true valuation of a given product 
only if she wins the respective auction, that is, if she submits the highest bid, and her reward in that case is equal to the difference 
between that valuation and the second highest bid; otherwise, the bidder learns nothing about her valuation and earns no reward. The goal of 
the bidder is to maximize her expected reward, which translates naturally into an exploration-exploitation tradeoff in her bidding strategy. 
\cite{WPR16} studies this problem from a nonstochastic viewpoint, and proposes a bidding policy whose expected regret scales optimally with 
respect to the number of auctions. There is an intriguing similarity to our problem, in that the learner receives feedback only when her 
action is greater than the opponent's, in which case the problem reduces to a full-information one. The cost structure, however, is quite 
different: in a newsvendor setting the amount of cost incurred depends on the inventory decision of the manager, whereas in a second-price 
auction the size of the reward is independent of the bid, conditional on winning/not winning the auction. The authors take advantage of this 
property by introducing a variation of the EWF policy that is based on interval splitting. Their approach does not seem applicable to our 
setting, and our cost estimator is, not surprisingly, quite different.

The second problem is sequential stock allocation to dark pools: a trader receives amounts of stocks to liquidate at different periods, and 
allocates these amounts among different dark pools, whose demand for the particular stocks is unknown a priori. The trader learns the demand 
at a certain dark pool only if the amount of stocks allocated there exceeds it. This problem is formulated as a dynamic learning problem 
with limited feedback in \cite{GNKV10}, and analyzed from a nonstochastic viewpoint with regret guarantees in \cite{ABD10}. It is not hard 
to see that this is almost identical to the ``combinatorial'' newsvendor problem in Section \ref{combnews}, with the main difference being 
that the amount of inventory that the warehouse receives is constant, whereas the amount of stocks that the trader tries to liquidate may 
vary. (Relatedly, the notion of regret that the authors adopt is somewhat different than ours.) Hence, one can argue that we analyze a 
special case of the problem in \cite{ABD10}. On the other hand, the expected regret of the policy proposed in the latter paper, for the case 
of integral allocations, scales as $O\left(T^{2/3}\right)$, a result which we improve considerably upon by taking advantage of the local 
observability property.

\subsection{Open Questions}

A limitation of our results, as well as of most results on partial monitoring problems, is that they only hold in 
expectation, and assuming ``oblivious'' adversaries. While the latter restriction can be easily relaxed by standard techniques
(see the discussion in \citealp{ADT12}), extending the results to hold with high probability seems like a significant challenge. Indeed, 
all known algorithms that have such guarantees, such as the Exp3.P policy of \citet{ACFS02} or the Exp3-IX policy of \citet{Neu15b},
use optimistically biased estimates of the costs, whereas our proofs crucially rely on using unbiased estimators of cost differences 
for arbitrary pairs of actions. It is unclear how one can construct estimators that would provide a suitable optimistic bias, simultaneously 
for all pairs of actions.

Regarding the single-warehouse multi-retailer variant of our setting, we note that the policy proposed in Section~\ref{combnews} is not 
computationally efficient in its current form. However, it is already unclear how to compute an optimal solution in an efficient way even 
in the offline variant of this problem (i.e., computing an optimal allocation in perfect knowledge of the demands). If such an 
efficient way exists, a natural extension of the FPL-IX policy in \citet{KNVM14} would also admit a computationally efficient implementation, 
thus settling this question. On the other hand, if the offline problem turns out to be hard, it is unreasonable to expect that there is an 
efficient algorithm for the online variant that we consider. We leave the study of this offline optimization problem as future work.

In Section~\ref{TrackReg}, we provide an extension to the EWF that guarantees bounds on the tracking regret. While this performance 
measure is already much more expressive than regret against the best fixed action, we note that there are several, even stronger baselines 
that can be considered. Two particular performance notions are the strongly adaptive regret, which compares the performance of 
the learner to the best action within every subinterval of $[1,T]$, see \cite{Z03,HS09,DGS15,AKCV16}; and the dynamic regret, which uses the 
best sequence of actions as a comparator, see \cite{HW13,BGZ14,BGZ15,JRSS15,KO16}. The FSF scheme, proposed in 
Section~\ref{TrackReg}, can be already shown to guarantee non-trivial bounds on its adaptive regret, as revealed by an inspection of the 
proof of Theorem~\ref{fsfthm}, yet it is unclear whether non-trivial bounds can be achieved on 
the strongly adaptive regret or the dynamic regret in our setting. Regarding the dynamic regret, we find it likely that the techniques of 
\citet{BGZ15} can be adapted to our setting to achieve meaningful bounds, although one may need additional tools from \citet{BGZ14} and 
\citet{KO16} to deal with partial feedback. The case of strong adaptivity seems to be significantly more complicated: as shown 
by \citet{DGS15}, it is impossible to achieve non-trivial strongly adaptive regret guarantees in the multi-armed bandit setting. Since, feedback-wise, our 
setting is situated between the multi-armed bandit and full information, this negative result does not rule out the possibility to devise 
strongly adaptive algorithms. We leave this investigation, again, for future work.

%=============================================================================================================
%=============================================================================================================

\section*{Acknowledgements}

G.\ Lugosi was supported by the Spanish Ministry of Economy, Industry and Competitiveness through the grant Grant MTM2015-67304-P 
(AEI/FEDER, UE). M.G.\ Markakis was supported by the Spanish Ministry of Economy, Industry and Competitiveness through a Juan de la Cierva 
fellowship, and the grant ECO2016-75905-R (AEI/FEDER, UE). G.\ Neu was supported by the UPFellows Fellowship (Marie Curie COFUND program 
n${^\circ}$ 600387).

\bibliographystyle{ormsv080}

\bibliography{bibl}

\section*{Appendix 1: Proofs of Main Results}

\subsection*{Proof of Theorem \ref{ewfthm}}

Our analysis largely follows the steps of the proof of Theorem~3.1 by \citet{ACFS02}, combined with our Lemmas~\ref{estprop} 
and~\ref{unbias}. We analyze a slightly more general version of the EWF algorithm that uses an arbitrary exploration distribution 
$\mu$, with $\mu_i$ being the probability of taking action $i$ in exploration rounds. More precisely, we  consider a version of EWF 
that computes its probability distributions over the actions as 
\[
 p_i(t) = (1-\gamma) \frac{W_i(t-1)}{W(t-1)} + \gamma \mu_i, \qquad i \in \I.
\]
The statement will follow from setting $\mu_i = 1/N$ for all actions $i$.

The key idea of the analysis is studying the term $\log\big(W(T)/W(0)\big)$ which, as we show shortly, relates 
closely to the regret. We start by constructing a lower bound:
\begin{align}
\log \left(\frac{W(T)}{W(0)}\right) &= \log \big(W(T)\big) - \log \big(W(0)\big) \nonumber \\
&= \log \left( \sum_{i \in \I} W_i(T) \right) - \log \left(\sum_{i \in \I} W_i(0) \right) \nonumber \\
&= \log \left( \sum_{i \in \I} e^{-\eta \wt C_i(T)} \right) - \log N \nonumber \\
&\geq \log \left(e^{-\eta \wt C_{i^*}(T)} \right) - \log N \nonumber \\
&= - \eta \sum_{t \in \T} \wt c(i^*,d_t) - \log N, \label{lb}
\end{align}
where $i^*$ is the best fixed action in hindsight, for the particular demand sequence.

Then, we derive an upper bound on $\log\big(W(T)/W(0)\big)$:
\begin{align*}
\log \left(\frac{W(T)}{W(0)}\right) &=  \log \left(\prod_{t \in \T} \frac{W(t)}{W(t-1)}\right) \\
&=  \sum_{t \in \T} \log \left(\frac{W(t)}{W(t-1)}\right) \\
&=  \sum_{t \in \T} \log \left(\sum_{i \in \I} \frac{W_i(t)}{W(t-1)}\right) \\
&= \sum_{t \in \T} \log \left(\sum_{i \in \I} \frac{W_i(t-1)}{W(t-1)} e^{-\eta \wt c(i,d_t)}\right)
\quad \text{(by Eq. \eqref{dyn}).}
\end{align*}
Note that $e^{-x} \leq 1- x + x^2/2$, for all $x \geq 0$, and our estimators for the cost of the different decisions are nonnegative. Thus,
\begin{align*}
\log \left(\frac{W(T)}{W(0)}\right) &\leq \sum_{t \in \T} \log \left(\sum_{i \in \I} \frac{W_i(t-1)}{W(t-1)} \left(1-\eta \wt c(i,d_t) + \frac{\eta^2}{2} \wt c(i,d_t)^2\right)\right) \\
&= \sum_{t \in \T} \log \left(1-\eta \sum_{i \in \I} \frac{W_i(t-1)}{W(t-1)} \wt c(i,d_t) + \frac{\eta^2}{2} \sum_{i \in \I} \frac{W_i(t-1)}{W(t-1)} \wt c(i,d_t)^2\right).
\end{align*}
Moreover, $\log(1+x) \leq x$, for all $x>-1$. Since this is the case with the right-hand side of the expression above (being an upper bound 
to a sum of exponential terms), we have, using Eq. \eqref{ewf}, that
\begin{align}
\log \left(\frac{W(T)}{W(0)}\right) &\leq \sum_{t \in \T} \left(-\eta \sum_{i \in \I} \frac{W_i(t-1)}{W(t-1)} \wt c(i,d_t) + 
\frac{\eta^2}{2} \sum_{i \in \I} \frac{W_i(t-1)}{W(t-1)} \wt c(i,d_t)^2\right) \nonumber \\
&= \sum_{t \in \T} \left(-\eta \sum_{i \in \I} \frac{p_i(t)-\gamma\mu_i}{1-\gamma} \wt c(i,d_t) + 
\frac{\eta^2}{2} 
\sum_{i \in \I} \frac{p_i(t)-\gamma\mu_i}{1-\gamma} \wt c(i,d_t)^2\right). \label{ub}
\end{align}

Eqs. \eqref{lb} and \eqref{ub} imply that
\[
\frac{\eta}{1-\gamma} \sum_{t \in \T} \sum_{i \in \I} p_i(t) \wt c(i,d_t) - \eta \sum_{t \in \T} \wt c(i^*,d_t) \leq \log N + 
\frac{\eta\gamma}{1-\gamma} \sum_{t \in \T} \sum_{i \in \I} \mu_i \wt c(i,d_t) + \frac{\eta^2}{2(1-\gamma)}\sum_{t \in \T} 
\sum_{i \in \I} p_i(t) \wt c(i,d_t)^2.
\]
Since $\wt c(i^*,d_t) \geq 0$, for all $t \in \T$, we have that
\[
\sum_{t \in \T} \sum_{i \in \I} p_i(t) \wt c(i,d_t) - \sum_{t \in \T} \wt c(i^*,d_t) \leq \frac{\log N}{\eta} + \gamma \sum_{t \in 
\T} \sum_{i \in \I} \mu_i \wt c(i,d_t) + \frac{\eta}{2}\sum_{t \in \T} \sum_{i \in \I} p_i(t) \wt c(i,d_t)^2.
\]
This further implies that
\[
\EE{\sum_{t \in \T} \sum_{i \in \I} p_i(t) \big(\wt c(i,d_t) - \wt c(i^*,d_t)\big)} \leq \frac{\log N}{\eta} + \gamma 
\E\left[\sum_{t \in \T} \sum_{i \in \I} \mu_i \wt c(i,d_t) \right] + \frac{\eta}{2}\E\left[\sum_{t \in \T} \sum_{i \in \I} p_i(t) \wt 
c(i,d_t)^2 \right].
\]
The tower rule of expectations along with Eq.~\eqref{expreg} and Lemma \ref{unbias} imply that
\begin{align*}
 \EE{\sum_{t \in \T} \sum_{i \in \I} p_i(t) \big(\wt c(i,d_t) - \wt c(i^*,d_t)\big)} &= \EE{\sum_{t \in \T} \sum_{i \in \I} p_i(t) \E_t 
\big[\wt c(i,d_t) - \wt c(i^*,d_t) \big] } 
\\
&= \EE{\sum_{t \in \T} \sum_{i \in \I} p_i(t) \big(c(i,d_t) - c(i^*,d_t)\big)} 
= \E[\R(T)].
\end{align*}
Combined with Lemma \ref{estprop}, and the fact that $\mu_i = 1/ N$, we get:
\begin{align*}
\E[\R(T)] &\leq \frac{\log N}{\eta} + \frac{\gamma}{N} \sum_{t \in \T} \sum_{i \in \I} 2 \beta + \frac{\eta}{2}\sum_{t \in \T} \sum_{i \in 
\I} p_i(t) \frac{4 \beta^2}{\P_t(I_t \geq i)} \\
&= \frac{\log N}{\eta} + 2 \beta \gamma T + 2 \beta^2 \eta \sum_{t \in \T} \sum_{i \in \I} \frac{p_i(t)}{\P_t(I_t \geq i)}.
\end{align*}

The final step in the proof is to bound the term $\sum_{i \in \I} p_i(t) \big/ \P_t(I_t \geq i)$. We have that
\[
\sum_{i \in \I} \frac{p_i(t)}{\P_t(I_t \geq i)} = \sum_{i \in \I} \frac{p_i(t)}{p_i(t) + \sum_{j \in \I: j > i} p_j(t)} \leq 2 \log \left(\frac{N^3}{\gamma} + N + 2 \right) + 2.
\]
This follows directly from Lemma 16 in \cite{ACGMMSpre} with the following correspondences: the number of nodes in the ``feedback graph'' 
is equal to $N$ in our setting; the independence number of the feedback graph is equal to 1 in our setting; a lower bound on the 
probabilities of choosing the different actions is $\gamma/N$ in our setting; and a dominating set of the feedback graph in our case is the 
single-element set that contains the largest element of $\I$.

Consequently,
\[
\E[\R(T)] \leq \frac{\log N}{\eta} + 2 \beta \gamma T + 4 \beta^2 \eta T \log \left(\frac{N^3}{\gamma} + N + 2 \right) + 4 \beta^2 \eta T.
\]
By choosing
\[
\eta = \sqrt{\frac{\log N}{4 \beta^2 T \log \left(\frac{N^3}{\gamma} + N + 2 \right)}},
\]
we have that
\[
\E[\R(T)] \leq 4 \beta \sqrt{T \log N \log \left(\frac{N^3}{\gamma} + N + 2 \right)} + 2 \beta \sqrt{T \log N} + 2 \beta \gamma T.
\]
Finally, by setting $\gamma = 1 / 2 \beta T$, we get:
\[
\E[\R(T)] \leq 4 \beta \sqrt{T \log N \log \left(2 \beta T N^3 + N + 2 \right)} + 2 \beta \sqrt{T \log N} + 1.
\]

\bigskip\subsection*{Proof of Theorem \ref{fsfthm}}

We follow the steps of the proof of Theorem~8.1 in \citet{ACFS02}. To this end, fix a comparator sequence $i_{[T]} \in \I^T_S$, 
and partition the interval $[1,T]$ into a number of subintervals $I_1 = [1,T_1]$, $I_2 = [T_1 + 1,T_2]$, 
$\dots$, $I_C = [T_{C-1}+1,T]$, such that $i_t$ remains constant within each interval. Since $i_{[T]}\in \I^T_S$, we have that
$C\le S$. In the remainder of the proof, we bound the regret within each interval, and then combine the obtained bounds to prove a 
guarantee about the (global) tracking regret. 

Fix an arbitrary interval $I_s$, $s \in \{1,\ldots,C\}$, and let $j_s$ be the action taken during that 
interval (i.e., $i_t = j_s$, for all $t\in I_s$). Also, let $\tau_s = T_{s} - T_{s-1}$ be the length of $I_s$. As in the proof of 
Theorem~\ref{ewfthm}, we study the term $\log\big(W(T_s)/W(T_{s-1})\big)$. First, note that 
\begin{align*}
 W_{j_s}(T_s) &\ge W_{j_s}(T_{s-1}+1) \exp\pa{-\eta \sum_{t=T_{s-1}+2}^{T_s} \tc(j_s,d_t)}
 \\
 &\ge \frac{\alpha}{N} W(T_{s-1}) \exp\pa{-\eta \sum_{t=T_{s-1}+2}^{T_s} \tc(j_s,d_t)}
 \\
 &\ge \frac{\alpha}{N} W(T_{s-1}) \exp\pa{-\eta \sum_{t=T_{s-1}+1}^{T_s} \tc(j_s,d_t)},
\end{align*}
where the first and second inequalities follow from the (recursive) definition of the sequence of weights $W_j(t)$, and the last one 
is a consequence of the non-negativity of the loss estimates $\tc(i,d_t)$. Hence,
\[
 \log\pa{\frac{W(T_s)}{W(T_{s-1})}} \ge \log\pa{\frac{W_{j_s}(T_s)}{W(T_{s-1})}} \ge \log \pa{\frac{\alpha}{N}} - \eta 
\sum_{t=T_{s-1}+1}^{T_s} \tc(j_s,d_t).
\]
On the other hand, we have that
\[
 \frac{W(t+1)}{W(t)} = \sum_{i\in\I} \frac{W_i(t)e^{-\eta \tc(i,d_t)} + \frac{\alpha}{N} W(t)}{W(t)} = \sum_{i\in\I} \frac{W_i(t)e^{-\eta 
\tc(i,d_t)}}{W(t)} + \alpha,
\]
so by a similar line of reasoning as in the proof of Theorem~\ref{ewfthm}, it can be verified that
\[
 \log \frac{W(T_s)}{W(T_{s-1})} \le \sum_{t = T_{s-1}+1}^{T_{s}} \left(-\eta \sum_{i \in \I} \frac{p_i(t)-\gamma/N}{1-\gamma} \wt 
c(i,d_t) + \frac{\eta^2}{2} \sum_{i \in \I} \frac{p_i(t)-\gamma/N}{1-\gamma} \wt c(i,d_t)^2 + \alpha\right).
\]

Combining the two bounds together, taking expectations, and appealing to Lemma~\ref{estprop}, we obtain:
\begin{equation}\label{eq:fsf_almost}
\EE{\sum_{t=T_{s-1}+1}^{T_s} \sum_{i \in \I} p_i(t) \big(\wt c(i,d_t) - \wt c(j_s,d_t)\big)} \leq \frac{\log 
\pa{\frac{N}{\alpha}}}{\eta} + 2\beta\gamma \tau_s + 2 \beta^2 \eta \EE{\sum_{t = T_{s-1}+1}^{T_s} \sum_{i \in \I} \frac{p_i(t)}{\P_t(I_t 
\geq i)}} + \alpha \tau_s.
\end{equation}
As in the proof of Theorem~\ref{ewfthm}, the third term on the right-hand side can be bounded from above as
\begin{align*}
\sum_{i \in \I} \frac{p_i(t)}{\P_t (I_t \geq i)} \leq 2 \log \left(\frac{N^3}{\gamma} + N + 2 \right) + 2.
\end{align*}
Using this bound, we can add over all intervals $s \in \{1,\ldots,C\}$ both sides of Eq.~\eqref{eq:fsf_almost}, and use Lemma~\ref{unbias} 
to obtain:
\[
 \EE{\sum_{t=1}^T \big(c(I_t,d_t) - c(i_t,d_t)\big)} \le \frac{S \log \pa{\frac{N}{\alpha}}}{\eta} + 
2\beta\gamma T + 4 \beta^2 \eta T \log \left(\frac{N^3}{\gamma} + N + 2 \right) + \alpha T.
\]
The statement of the theorem follows from taking the supremum over all $i_{[T]}\in \I^T_S$\footnote{We note that supremum and expectation 
can be interchanged in our case, since the comparator sequence, that the supremum is taken with respect to, is deterministic. 
This would not have been the case, e.g., if the firm competed against an adaptive adversary.}, and substituting for the chosen values of $\gamma$, 
$\eta$, and $\alpha$.

\bigskip\subsection*{Proof of Lemma \ref{lem:comb_var}}

For simplicity, let us introduce the notation
\[
 \ell_k\left(i^{(k)},d_t^{(k)}\right) = v_i^{T} \S_i e_{d_t^{(k)}} +f^{(k)} \II{i^{(k)}>0} + \beta,
\]
so that each retailer's cost estimate can be written as
\[
\wt c_k\left(i^{(k)},d_t^{(k)}\right) = \frac{\II{I_t^{(k)} \geq i^{(k)}} \ell_k\left(i^{(k)},d_t^{(k)}\right)}{\P_t \left(I_t^{(k)}
\geq i^{(k)}\right)},
\]
for all $k\in\K$. Also, let $\tI_t$ be an independent copy of $I_t$. With this notation, we have that
\begin{align*}
 &\sum_{\left(i^{(1)},\ldots,i^{(K)}\right) \in \A_r} p_{\left(i^{(1)},\ldots,i^{(K)}\right)}(t) 
\E_t\left[\wt c\left(i^{(1)},\ldots,i^{(K)}, r, d_t^{(1)},\ldots,d_t^{(K)}\right)^2 \right] \\
= &\E_t\left[\pa{c_0\left(\tI_t^{(1)},\ldots,\tI_t^{(K)}, r\right) + \sum_{k\in\K} \wt c_k\left(\tI_t^{(k)},d_t^{(k)}\right)}^2 \right] \\
\le &2\E_t\left[c_0\left(\tI_t^{(1)},\ldots,\tI_t^{(K)}, r\right)^2 + \pa{\sum_{k\in\K} \wt c_k\left(\tI_t^{(k)},d_t^{(k)}\right)}^2 \right],
\end{align*}
where the last step follows from the inequality $(a+b)^2 \le 2\pa{a^2 + b^2}$, which holds for all $a,b\in\real$. The first term can be 
trivially bounded by $(f+K\beta)^2$. Regarding the second term, we have that
\begin{align*}
&\E_t\left[\pa{\sum_{k\in\K} \wt c_k\left(\tI_t^{(k)},d_t^{(k)}\right)}^2 \right] \\
= &\E_t\left[\pa{\sum_{j\in\K}  \frac{\II{I_t^{(j)} \geq \tI_t^{(j)}}}{\P_t \left(I_t^{(j)} \geq \tI_t^{(j)}\right)} 
\ell_j\left(\tI_t^{(j)},d_t^{(j)}\right)}\cdot \pa{\sum_{k\in\K}  \frac{\II{I_t^{(k)} \geq \tI_t^{(k)}}}{\P_t \left(I_t^{(k)} \geq \tI_t^{(k)}\right)} 
\ell_k\left(\tI_t^{(k)},d_t^{(k)}\right)} \right] \\
= &\E_t\left[\sum_{j\in\K} \sum_{k\in\K} \frac{\II{I_t^{(j)} \geq \tI_t^{(j)}} \II{I_t^{(k)} \geq 
\tI_t^{(k)}}}{\P_t \left(I_t^{(j)}\geq \tI_t^{(j)}\right) \P_t \left(I_t^{(k)}\geq \tI_t^{(k)}\right)} 
\ell_j\left(\tI_t^{(j)},d_t^{(j)}\right) \ell_k\left(\tI_t^{(k)},d_t^{(k)}\right) \right] \\
\le &\frac{1}{2} \E_t\left[\sum_{j\in\K} \sum_{k\in\K} \pa{\frac{1}{\P_t \left(I_t^{(j)}\geq \tI_t^{(j)}\right)^2} + 
\frac{1}{\P_t \left(I_t^{(k)}\geq \tI_t^{(k)}\right)^2}}\II{I_t^{(j)} \geq \tI_t^{(j)}} \II{I_t^{(k)} \geq \tI_t^{(k)}} 
\ell_j\left(\tI_t^{(j)},d_t^{(j)}\right) \ell_k\left(\tI_t^{(k)},d_t^{(k)}\right) \right],
\end{align*}
again using $(a+b)^2 \le 2\pa{a^2 + b^2}$. We further have that
\begin{align*}
&\E_t\left[\pa{\sum_{k\in\K} \wt c_k\left(\tI_t^{(k)},d_t^{(k)}\right)}^2 \right] \\
= &\E_t\left[\sum_{j\in\K} \sum_{k\in\K} \frac{1}{\P_t \left(I_t^{(j)}\geq \tI_t^{(j)}\right)^2} \II{I_t^{(j)} \geq \tI_t^{(j)}} \II{I_t^{(k)} 
\geq \tI_t^{(k)}} \ell_j\left(\tI_t^{(j)},d_t^{(j)}\right) \ell_k\left(\tI_t^{(k)},d_t^{(k)}\right) \right],
\end{align*}
due to the symmetry between $j$ and $k$, which implies that
\begin{align*}
&\E_t\left[\pa{\sum_{k\in\K} \wt c_k\left(\tI_t^{(k)},d_t^{(k)}\right)}^2 \right] \\
= &\E_t\left[\sum_{j\in\K} \frac{1}{\P_t \left(I_t^{(j)}\geq \tI_t^{(j)}\right)^2} \II{I_t^{(j)} \geq \tI_t^{(j)}} 
\ell_j\left(\tI_t^{(j)},d_t^{(j)}\right) \sum_{k\in\K} \II{I_t^{(k)} \geq \tI_t^{(k)}} \ell_k\left(\tI_t^{(k)},d_t^{(k)}\right) \right] \\
\le &2K\beta\E_t\left[\sum_{j\in\K} \frac{1}{\P_t \left(I_t^{(j)}\geq \tI_t^{(j)}\right)^2} \II{I_t^{(j)} \geq \tI_t^{(j)}} \ell_j\left(\tI_t^{(j)},d_t^{(j)}\right)\right] \\
= &2K\beta\E_t\left[\sum_{j\in\K} \frac{\ell_j\left(\tI_t^{(j)},d_t^{(j)}\right)}{\P_t \left(I_t^{(j)}\geq \tI_t^{(j)}\right)} \right] \\
\le &4K\beta^2 \sum_{j=1}^K \E_t\left[\sum_{i=1}^N \frac{\P_t\pa{I_t^{(j)} = i}}{\P_t \left(I_t^{(j)}\geq i\right)} \right],
\end{align*}
where the inequalities follow from bounding from above $\ell_j (\cdot,\cdot)$ by $2\beta$. 

It remains to bound the sums within the expectation, for all $j$. To this end, we observe that our exploration distribution $\mu$ guarantees that $\PP{I_t^{(j)}=i}\ge \frac{\gamma}{NK}$ holds for all $i,j$. Given this lower bound, we can apply Lemma~16 of \citet{ACGMMSpre} as done in the proof of Theorem~\ref{ewfthm}:
\[
\sum_{i=1}^N \frac{\P_t\pa{I_t^{(j)} = i}}{\P_t \left(I_t^{(j)}\geq i\right)} \le 2 \log \left(\frac{KN^3}{\gamma} + N + 2 \right) + 2.
\]
Putting everything together, we obtain the desired result.

\bigskip\subsection*{Proof of Theorem \ref{ewfthmcomb}}

By following closely the proof of Theorem \ref{ewfthm} with our definition of $\mu$ and applying Eq.\ \eqref{locobscomb}, we have that
\begin{align*}
\E[\R_c(T)] \leq &\frac{\log |\A_r|}{\eta} + \gamma \sum_{t \in \T} \sum_{\left(i^{(1)},\ldots,i^{(K)}\right) \in \A_r} 
\mu_{\left(i^{(1)},\ldots,i^{(K)}\right)}
\E_t\left[\wt c\left(i^{(1)},\ldots,i^{(K)}, r, d_t^{(1)},\ldots,d_t^{(K)}\right) \right] \\
&+ \frac{\eta}{2}\sum_{t \in \T} \sum_{\left(i^{(1)},\ldots,i^{(K)}\right) \in \A_r} p_{\left(i^{(1)},\ldots,i^{(K)}\right)}(t) 
\E_t\left[\wt c\left(i^{(1)},\ldots,i^{(K)}, r, d_t^{(1)},\ldots,d_t^{(K)}\right)^2 \right] \\
\leq &\frac{K \log N}{\eta} + \gamma \big(f + K(2 \beta + f)\big) T \\
&+ \eta \pa{8 K^2 \beta^2 \log\pa{\frac{KN^3}{\gamma} + N + 2} + 8 K^2\beta^2 + (f+hr)^2},
\end{align*}
where the last step uses Eq.~\eqref{meancomb} and Lemma~\ref{lem:comb_var}. Substituting the prescribed values for $\gamma$ and $\eta$ 
yields the statement of the theorem.

%=============================================================================================================
%=============================================================================================================

\section*{Appendix 2: The AIM Policy}

Let us briefly describe the AIM policy, introduced in \citet{HR09}, originally proposed for i.i.d.~demand sequences. In the case of discrete
demand--see the AIM-discrete variant, in Section~3.4 of \citet{HR09}--the policy has a guaranteed performance only in the ``partially censored'' 
case, that is, if the learner has access to a lost-sales indicator. It is based on the online gradient descent method of \citet{Z03}: 
the key idea is to compute recursively the sequence of auxiliary points
\[
 x_{t+1} = \Pi\pa{x_t - \alpha_t g_t},
\]
where $g_t$ is an estimate of the left derivative\footnote{In general, the role of the left derivative can be played by any element of the 
subdifferential; we avoid this terminology to maintain clarity of exposition.} of the loss function $c(\cdot,d_t)$, evaluated at $x_t$, 
and $\Pi$ is a projection operator on the interval $[0,D]$. Recall that $c(x,d_t)$, as defined in Equation~\eqref{newscost}, is convex, so it 
has a well-defined left derivative: $-h$ for $x < d_t$, $b$ for $x>d_t$, and $[-h,b]$ for $x=d_t$. With the help of this sequence of 
auxiliary points, the actual (discrete) inventory levels are chosen randomly as follows:
\[
 I_t = \left\lfloor x_t\right\rfloor + B_t,
\]
where $B_t$ is a Bernoulli random variable with expectation $x_t - \left\lfloor x_t\right\rfloor$, so that $\EEt{I_t} = x_t$. The most pertinent
question concerns the estimation of the gradients of $c(\cdot,d_t)$ in a reliable way. \citet{HR09} suggest to use
\[
 g_t = \begin{cases}
        - b + \pa{h+b}\II{d_t\le I_t}, &\mbox{if $I_t = \left\lfloor x_t\right\rfloor$}
        \\
        - b + \pa{h+b}\II{d_t\le I_t - 1}, &\mbox{if $I_t = \left\lceil x_t\right\rceil$,}
       \end{cases}
\]
which is an unbiased estimator of the left derivative. Crucially, however, the construction of this operator requires access to the 
lost sales indicator $\II{d_t\le I_t}$, which on many occasions may not be available. 
In the absence of such an indicator, one can think of using the simpler estimator
\[
 \wh{g}_t = - b + \pa{h+b}\II{d_t\le I_t - 1},
\]
instead of $g_t$ in the policy described above; this is actually the scheme that we implement in the numerical experiments of Section~\ref{Sim}.
While this estimator can be computed given censored/sales data, it is easy to see that it is a \textit{biased} estimator of the left derivative. 
In particular, since $\II{d_t \le I_t -1} > \II{d_t \le I_t}$, $\wh{g}_t$ consistently overestimates the true derivative. This shortcoming severely 
impacts the behavior of the policy: even for constant demand realizations, it always overshoots the optimal inventory 
level. This effect is illustrated in Figure~\ref{fig:aim}, where the AIM policy has a linearly increasing regret in a very simple scenario: 
three possible order levels, $\ev{0,1,2}$, and a constant demand of $1$.

\begin{figure}
\begin{center}
\includegraphics[width=.5\textwidth]{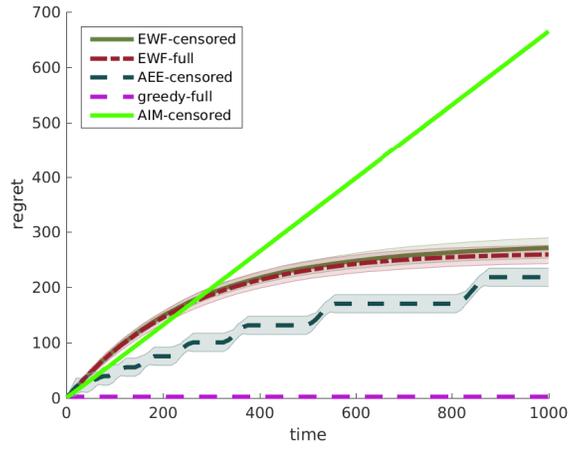}
\caption{The misbehavior of the AIM policy.}\label{fig:aim}
\end{center}
\end{figure}

\end{document}